\newtheorem{definition}{Definition}[]
\newtheorem{theorem}{Theorem}[]
\newtheorem{lemma}[]{Lemma}
\newcommand{\argmax}{\arg\!\max}
\newcommand{\argmin}{\arg\!\min}
\DeclarePairedDelimiterX{\infdivx}[2]{(}{)}{%
  #1\;\delimsize\|\;#2%
}
\newcommand{\infdiv}{D_{\text{KL}}\infdivx}
\DeclareMathOperator{\E}{\mathbb{E}}
\DeclarePairedDelimiterX{\gendivx}[2]{(}{)}{%
  #1\;\delimsize\|\;#2%
}
\newcommand{\gendiv}{D\gendivx}
\def\R{{\mathbb{R}}}
\def\lp{\left(}
\def\rp{\right)}
\def\E{\mathbb{E}}
\def\R{\mathbb{R}}
\def\sS{\mathcal{S}}
\def\sN{\mathcal{N}}
\def\sT{\mathcal{T}}
\newcommand{\RNum}[1]{\uppercase\expandafter{\romannumeral #1\relax}}
\renewcommand{\fnum@figure}{Fig.~\thefigure}
\title{\textbf{Localized Distributional Robustness in Submodular Multi-Task Subset Selection}}
\date{}
\author{Ege C. Kaya, Abolfazl Hashemi\thanks{The authors are with the Elmore Family School of Electrical and Computer Engineering, Purdue University, West Lafayette, IN, USA. This work was presented in part at the 2023 Annual Conference on Communication, Control, and Computing (Allerton) \cite{10313462}. The full work was published in IEEE Transactions on Signal Processing, 2024 \cite{TSP}. This work is supported in part by DARPA grant HR00112220025.}}
\begin{document}
\maketitle
\begin{abstract}
In this work, we treat the problem of multi-task submodular optimization from the perspective of local distributional robustness within the neighborhood of a reference distribution which assigns an importance score to each task. We initially propose to introduce a relative-entropy regularization term to the standard multi-task objective. We then demonstrate through duality that this novel formulation 
itself is equivalent to the maximization of a monotone increasing function composed with a submodular function, which may be efficiently carried out through standard greedy selection methods. This approach bridges the existing gap in the optimization of performance-robustness trade-offs in multi-task subset selection. To numerically validate our theoretical results, we test the proposed method in two different settings, one on the selection of satellites in low Earth orbit constellations in the context of a sensor selection problem involving weak-submodular functions, and the other on an image summarization task using neural networks involving submodular functions. Our method is compared with two other algorithms focused on optimizing the performance of the worst-case task, and on directly optimizing the performance on the reference distribution itself. We conclude that our novel formulation produces a solution that is locally distributional robust, and computationally inexpensive.
\\\\\noindent
\textbf{Keywords:} distributionally robust optimization (DRO), local distributional robustness, multi-task submodular optimization, relative-entropy (KL) regularization, robust subset selection, greedy submodular maximization, stochastic greedy.
\end{abstract}
\section{Introduction}\label{sec:intro}
Submodular functions have for long been the focus of extensive studies\cite{edmonds2003submodular, fujishige2005submodular, iyer2013submodular, krause2014submodular}, thanks to their propensity to occur naturally in many distinct domains such as economics and algorithmic game theory \cite{cardinality, revenue, gametheory} and machine learning\cite{MLRef1,MLRef2,MLRef3,MLRef4,MLRef5,MLRef6,MLRef7,MLRef8,MLRef9,MLRef10}, within many distinct problems, ranging from sensor selection \cite{RSOS, krause2014submodular, greedysensor, krausenearoptimal} to social network modeling \cite{Mossel2007}. In simple terms, a submodular function is a set function that possesses the \textit{diminishing marginal returns} property \cite{nemhauser1978analysis}. This property, at first glance, enables an analogy between submodular functions and concave functions in the continuous domain. However, interestingly, a similar analogy may be drawn between submodular functions and \textit{convex} functions as well, through the use of certain extensions of submodular functions onto the continuous domain \cite{Lovasz1983, krause2014submodular}. The similarity of submodular functions to both convex and concave functions, the white whales of optimization literature, has also garnered interest in the study of the optimization of submodular functions.




The quintessential submodular optimization problem is that of the maximization of a submodular function under a cardinality constraint. This problem entails selecting a subset out of a ground set $\mathcal{N}$, such that the size of the selection does not exceed a predetermined constraint value, and the value of the selection under a submodular function is as high as possible. The evaluation $f(\mathcal{S})$ of a subset $\mathcal{S} \subseteq \mathcal{N}$ is commonly called the \textit{score} or \textit{utility of the set} $\sS$. The following formulation, where $f$ is a submodular function, $\sN$ is the ground set and $K$ is a positive integer, formalizes this problem\cite{cardinality}:
\begin{equation}
\begin{gathered}
\max_{\sS\subseteq \sN}f(\sS) \label{cardinality}\\
\text{s.t.}\;\lvert \sS\rvert \leq K.
\end{gathered}
\end{equation}

The notion of \textit{robustness} occurs very frequently in optimization literature and is the object of much consideration. Although there are differing definitions of robustness depending on the context, such as being immune to removals from the solution \cite{partitioning}, or to slight changes in parameters or input data\cite{Lan2020}, and others\cite{robust-survey}, it always encompasses the general idea of a produced output to a problem staying valid under perturbations to the input. In the context of submodular optimization, one notion of robustness that is commonly encountered is \textit{multi-task robustness}. As evident by the name, with this notion, the aim is to algorithmically produce a solution set whose utility with respect to a family of $n$ submodular functions $f_1, \ldots, f_n$ is satisfactory.

As it stands, the notion of ``satisfactory'' is intentionally unspecified, and the problem can be formulated in multiple ways to fit the description. A straightforward interpretation with this goal in mind might lead to what we will call the \textit{worst-case formulation} \cite{krause2014submodular, ben2009robust, ben2002robust, NEURIPS2018_7e448ed9}, which aims to produce a solution that maximizes the worst-performing objective function among all. Using the shorthand notation $[n] \vcentcolon= \{1, \ldots, n\}$, this formulation is formalized:
\begin{equation}
\begin{gathered}
\max_{\sS \subseteq \sN} \min_{i \in [n]} f_i(\sS) \label{robust-first}\\
\text{s.t.}\; \lvert \sS\rvert \leq K.
\end{gathered}
\end{equation}
A shortcoming of this formulation is that it is \textit{too pessimistic}\cite{malherbe2022robustness}, dedicating all resources to the maximization of the worst-performing objective function, at the expense of the others. In a scenario where one function is a clear straggler, always scoring significantly lower than all others, this approach may effectively lead to getting low utility on all functions in pursuit of the hopeless aim of maximizing the straggler.

Another naive way to formulate the multi-task robust problem is to adopt what we will call the \textit{average-case formulation} \cite{malherbe2022robustness, pmlr-v89-staib19a}, which aims to directly optimize the mean of all the objective functions. In this case, the formulation would be

\begin{equation}
\begin{gathered}
\max_{\sS \subseteq \sN} \frac{1}{n}\sum_{i=1}^n f_i(\sS) \label{robust-avg}\\
\text{s.t.}\;\lvert \sS\rvert \leq K.
\end{gathered}
\end{equation}
The apparent shortcoming of this formulation is that, contrary to the previous formulation, it provides no guarantees about how bad the utility with respect to any single objective function may be. One or more objective functions could be scoring arbitrarily badly, as long as the other objective functions are scoring well enough to compensate for the underperformers. 

The previous two formulations, even with their stated shortcomings, may very well be acceptable approaches, especially in the absence of any additional information. However, we argue that if we possess additional information on the relative importance of the objective functions, we can progress to a more meaningful formulation. Suppose, for instance, that we have access to a \textit{reference distribution}, a discrete probability distribution $Q\in\Delta_n$, where $\Delta_n$ indicates the $n$-dimensional simplex. In a practical setting, the reference distribution $Q$ may be obtained by a decision-maker subjectively assigning importance scores to each objective function, or by a frequentist approach, where the value assigned to each function would be inferred from how frequently the task that is modeled by said function is performed by the system. It is reasonable to argue that the reference distribution can facilitate incorporating the intent and priority of the main stakeholders in the problem. The incorporation of this additional information to the previous formulation leads to a generalization of \eqref{robust-avg}, which we can view as the scenario where the reference distribution is assumed to be uniform. This idea is encapsulated in the following formulation:
\begin{equation}
\begin{gathered}
\max_{\sS \subseteq \sN} \sum_{i=1}^n Q_if_i(\sS) \label{robust-weighted-avg}\\
\text{s.t.}\;\lvert \sS\rvert \leq K.
\end{gathered}
\end{equation}
We note that this formulation is a also generalization of \eqref{robust-first}, in that when $Q$ is a one-hot distribution assigning all weight to the worst-performing objective function, this problem becomes equivalent to \eqref{robust-first}.
\begin{figure}
\centering\includegraphics[width=.3\linewidth]{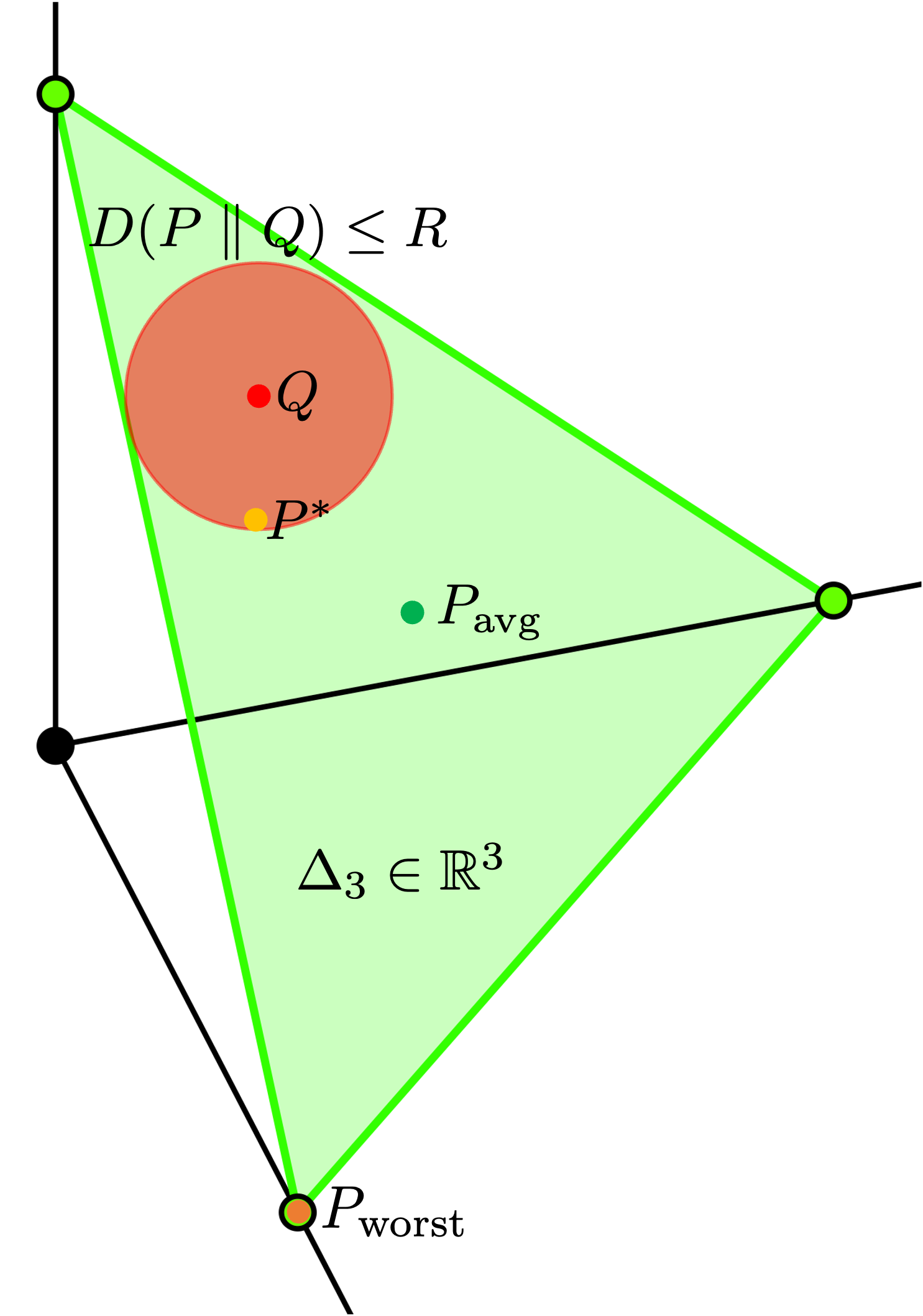}
  \caption{Illustration of the three discrete distributions on the $3$-dimensional simplex\cite{simpleximage} that the three discussed approaches optimize. $P_{\text{worst}}$ corresponds to the global worst-case task scenario, assigning a weight of $1$ to the worst-case task and a weight of $0$ to all the others, residing on a vertex of the simplex. $P_{\text{avg}}$ assigns uniform weight to all tasks, and lies in the center of the simplex. $Q$ is the reference distribution within a neighborhood of which we want to achieve local robustness. $P^\ast$ is the local worst-case distribution within that neighborhood of $Q$.}
  \label{fig:simplex}
\end{figure}
\subsection{Contribution}

Based on this last formulation, we propose our robust formulation, which not only makes use of the reference distribution to simply weigh each objective function but to act as the center of a \textit{neighborhood of robustness}, within which we aim to be robust. The initial candidate formulation for this is:
\begin{equation}\label{novel-robust}
\begin{gathered}
\max_{\sS \subseteq \sN} \min_{P \in \Delta_n} \sum^n_{i=1} P_i f_i(\sS) \\
\text{s.t.}\; \lvert \sS\rvert \leq K, \; \gendiv{P}{Q} \leq R,
\end{gathered}
\end{equation}
where the constraint $\gendiv{P}{Q} \leq R$ designates the localization to a specified neighborhood of robustness within which we will maximize our objectives weighed by the worst possible distribution. $\gendiv{P}{Q}$ is intentionally unspecified and could be any statistical distance (although one natural choice quickly arises through analysis). For an investigation of different statistical distances for $\gendiv{P}{Q}$, we refer the reader to Section \ref{sec:distances}.

Before we proceed with a treatment of \eqref{novel-robust}, we propose one last change to make it more tractable. We work with a Lagrangian relaxation the formulation by introducing the constraint $\gendiv{P}{Q} \leq R$ into the objective via regularization constant $\lambda > 0$ rather than strictly enforcing it \cite{Boyd}. With this final addition, our formulation becomes
\begin{equation}\label{novel-final}
\begin{gathered}
\max_{\sS \subseteq \sN} \min_{P \in \Delta_n} \sum^n_{i=1} P_i f_i(\sS) + \lambda\gendiv{P}{Q}\\
\text{s.t.}\; \lvert \sS\rvert \leq K.
\end{gathered}
\end{equation}

We propose that this final formulation will be useful in many multi-objective applications where we aim to maintain a performance threshold across all tasks, with respect to a reference distribution indicating the relative importance of each task. In particular, we undertake two such applications in Section \ref{sec:results}. The first deals with a low Earth orbit satellite constellation performing multiple tasks involving atmospheric readings and ground coverage, where in critical missions it may be of importance to have satisfactory performance in all objectives. 
The second involves a task of image summarization, where each objective represents the cosine distance of one image in a given image dataset to the current summary chosen out of the dataset. It is clear that in such a task, one would expect a certain level of performance to be met with respect to all of the images, so that the resulting summary is representative of the whole dataset.
\subsection{Related Works and Significance}
\textbf{Robust submodular optimization.} 
A fundamental work in establishing the notion of robustness in submodular optimization is \cite{RSOS}, and the same notion is adopted in the current manuscript. The authors propose an algorithmic solution to the worst-case formulation of \eqref{robust-first}, namely, the \textsc{Submodular Saturation Algorithm (SSA)}. SSA will be discussed considerably in the following sections and will constitute one of the baselines we compare against. 
In \cite{Powers2016ConstrainedRS, NEURIPS2018_7e448ed9}, the authors similarly adopt the same formulation of \eqref{robust-first} in consideration of robustness. The former generalizes the results of \cite{RSOS}, which deals with the cardinality-constrained problem, to cases where knapsack or other matroid constraints are present, including the case where the constraints themselves are submodular. The latter shifts the focus to achieving higher computational efficiency, aiming for a fast and practical algorithm with asymptotic approximation guarantees when the number of objective functions $n$ is increased arbitrarily. 
In \cite{malherbe2022robustness}, the authors propose a novel notion of robustness, still in the context of multiple objective functions. They first motivate their novel approach by outlining the pessimistic nature of \eqref{robust-first}. They instead propose an approach that entails the maximization of the $p$th quantile of the objective functions. To simplify, this approach may be viewed as disregarding the worst-performing $p$th of the objectives in favor of maximizing the remaining. Different from all of the previously mentioned works, the present work shifts the focus to the scope of \textit{distributional robustness}, focusing on localizing our goal of robustness within a subset of $\Delta_n$, to a neighborhood of a reference distribution. 
In \cite{partitioning, removalrobust}, the authors adopt one of the other aforementioned notions of robustness. In these works, the notion of robustness is against the post-processing of solutions for the arbitrary or adversarial deletion of a certain number of elements, rather than robustness against multiple submodular objective functions.

\textbf{Distributionally-robust optimization (DRO).} Our work is closely related to and motivated by the DRO paradigm. In a very general sense, DRO is a paradigm where some uncertainty about the nature of the problem is governed by a probability distribution and the goal is to leverage the information we have about this distribution with the goal of optimization or optimal decision-making. In \cite{distributional-convex}, the authors approach a convex optimization problem in the context of DRO. In \cite{dro-nonconvex}, the scope is broadened to the online optimization of nonconvex functions. In particular, the authors use relative-entropy regularization, which we also employ in this work. The authors of \cite{pmlr-v89-staib19a}, much like our work, deal with distributionally-robust optimization of submodular functions. They work in the presence of stochastic submodular objective functions that are drawn from a given reference distribution and aim to maximize their expected value. They propose an approach that entails a variance-regularized objective, making use of the multilinear extension of submodular functions and the Momentum Frank-Wolfe algorithm. In contrast to this line of work, we leverage DRO for discrete multi-task subset selection and demonstrate the possibility of formulating the distributionally-robust problem as a simple submodular maximization problem. This perspective essentially allows us to produce a distributionally-robust solution at no additional cost to producing a solution to \eqref{robust-weighted-avg}.

\section{Preliminaries and Background}
We begin by introducing the concept of the \textit{marginal return} in set functions.
\begin{definition}[Marginal return]
Given a set function $f\vcentcolon2^\sN\rightarrow\mathbb{R}$ and $\mathcal{S}, \mathcal{T} \subseteq \sN$, we denote $f(\mathcal{T} \cup \mathcal{S}) - f(\mathcal{T})$, the \textit{marginal return} in $f$ due to adding $\mathcal{S}$ to $\mathcal{T}$, by $\Delta_f(\sS\mid\mathcal{T})$. When $\sS$ is a singleton, i.e., $\sS = \{a\}$, we simply write $\Delta_f(a\mid\mathcal{T})$.
\end{definition} 
Now, we introduce the notion of submodularity using this definition.
\begin{definition}[Submodularity]\label{def:submod}
A set function $f\vcentcolon2^\sN\rightarrow\mathbb{R}$ is \textit{submodular} if for every $\sS\subseteq \mathcal{T} \subseteq \sN$ and $e \in \sN \setminus \mathcal{T},$ it holds that
\begin{equation}
\Delta_f(e\mid\sS) \geq \Delta_f(e\mid\mathcal{T}).
\end{equation}
\end{definition}
This definition of submodularity highlights the \textit{diminishing marginal returns} property. 
The common notion in thinking about a submodular function $f$ is that it scores each set by its utility, as such, we will commonly refer to the value $f(\sS)$ as the \textit{utility}, \textit{performance}, or \textit{score} of set $\sS$.

We further introduce two additional useful properties of set functions, that are sought along submodularity in the derivation of theoretical guarantees for approximation algorithms.
\begin{definition}[Normalized set functions]
A set function $f\vcentcolon 2^\sN \rightarrow \mathbb{R}$ is \textit{normalized} if $f(\emptyset) = 0$.
\end{definition}
\begin{definition}[Monotone nondecreasing set functions]
A set function $f\vcentcolon 2^\sN \rightarrow \mathbb{R}$ is \textit{monotone nondecreasing} if for every $\sS\subseteq \mathcal{T} \subseteq \sN,$ we have $f(\sS) \leq f(\mathcal{T}).$
\end{definition}

Note that a set function that is both normalized and monotone nondecreasing is necessarily nonnegative.

In practical applications, it is often the case that the objective functions involved are not submodular, but \textit{weak submodular}. In literature, there are multiple relaxations of submodularity, but for the purposes of this paper, we adopt one that is well-established \cite{routing, approximate, quadratic, hibbard_hashemi_tanaka_topcu_2023, kaya2024randomizedgreedymethodsweak}, and reasonable in relaxing the diminishing marginal returns property. To facilitate the definition, we first define the \textit{weak-submodularity constant:}
\begin{definition}[Weak-submodularity constant (WSC) \cite{hibbard_hashemi_tanaka_topcu_2023}]\label{def:wsc} The WSC of a monotone nondecreasing set function $f$ is given by
\begin{equation}
w_f \vcentcolon= \max_{(\sS, \mathcal{T}, e) \in \Tilde{\sN}} \dfrac{\Delta_f(e\mid \mathcal{T})}{\Delta_f(e\mid \sS)},
\end{equation}
where $\Tilde{\sN} = \{(\sS, \mathcal{T}, e): \sS \subseteq \mathcal{T} \subset \sN, e \in \sN \setminus \mathcal{T}\}.$
\end{definition}
An intuitive understanding of the WSC is that it is the degree of the maximum violation of the diminishing marginal returns property of a set function. Evidently, a monotone nondecreasing set function $f$ is submodular if and only if its WSC satisfies $w_f \leq 1$.

We now define weak submodularity.
\begin{definition}[Weak submodularity]\label{def:weaksubmod}
A set function $f \vcentcolon 2^\sN \to \mathbb{R}$ is weak submodular if it is monotone nondecreasing and its WSC satisfies $w_f < \infty$.
\end{definition}

Possibly the most fundamental result in submodular optimization concerns the maximization of normalized, monotone nondecreasing submodular functions under a cardinality constraint. In this case, the iterative \textsc{Greedy} algorithm, which simply consists of going over the entire set of remaining elements at each iteration and selecting the one with the highest marginal return, enjoys an approximation guarantee of $1-1/e$. Although simple in principle, \textsc{Greedy} is demonstrably the optimal approximation algorithm, unless $\text{P}=\text{NP}$.\cite{nemhauser1978analysis} 

While \textsc{Greedy} is demonstrably optimal and easy to implement, its naive form is usually avoided due to the computational cost of reevaluating every remaining element at each iteration. Several methods have been proposed to circumvent this shortcoming, two significant ones being \textsc{Lazy Greedy}\cite{lazygreedy}, and \textsc{Stochastic Greedy}\cite{mirzasoleiman2015lazier}. In particular, \textsc{Stochastic Greedy} achieves a reduced computational cost by restricting its evaluations to a subset of the remaining elements sampled uniformly at random at each iteration. \textsc{Stochastic Greedy} also enjoys, in expectation, an approximation ratio similar to that of \textsc{Greedy}, with an additional term that designates the dependence on the cardinality of the randomly sampled subset.
\begin{theorem}[\textsc{Stochastic Greedy} approximation ratio\label{thm:stoch-greedy}\cite{mirzasoleiman2015lazier}]
Let $f\vcentcolon 2^\sN \rightarrow \mathbb{R}$ be a normalized, monotone nondecreasing submodular function. Let $R = (\lvert \sN\rvert/K)\log(1/\epsilon)$ be the size of the sampled set at each iteration of \textsc{Stochastic Greedy} used in the solution of \eqref{cardinality}, where $\epsilon \in (0, 1)$. Then, in expectation, \textsc{Stochastic Greedy} achieves an approximation ratio of $1-1/e-\epsilon$, i.e., it produces a solution $\hat{\sS}$ which satisfies
\begin{equation}
    \E[f(\hat{\sS})] \geq (1 - 1/e - \epsilon) f(\sS^\ast),
\end{equation}
where $\sS^*$ is a solution of \eqref{cardinality}.
\end{theorem}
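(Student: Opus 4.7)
The plan is to adapt the textbook analysis of \textsc{Greedy} to account for the random subsampling performed at each step. Let $\sS_0 = \emptyset$ and $\sS_i = \sS_{i-1} \cup \{a_i\}$, where $a_i \in \argmax_{v \in R_i} \Delta_f(v \mid \sS_{i-1})$ is the greedily selected element of the random subset $R_i$ drawn at iteration $i$. The central goal is the per-iteration bound
\begin{equation}\label{eq:per-iter}
\E\bigl[\Delta_f(a_i \mid \sS_{i-1}) \,\big|\, \sS_{i-1}\bigr] \;\geq\; \frac{1-\epsilon}{K}\bigl(f(\sS^\ast) - f(\sS_{i-1})\bigr),
\end{equation}
after which a standard telescoping argument delivers the theorem.

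To establish \eqref{eq:per-iter}, I would first use monotonicity and submodularity to write
\begin{equation}
f(\sS^\ast) - f(\sS_{i-1}) \;\leq\; f(\sS^\ast \cup \sS_{i-1}) - f(\sS_{i-1}) \;\leq\; \sum_{o \in \sS^\ast\setminus\sS_{i-1}} \Delta_f(o\mid\sS_{i-1}),
\end{equation}
so the \emph{average} marginal gain over $\sS^\ast\setminus\sS_{i-1}$ is at least $(f(\sS^\ast)-f(\sS_{i-1}))/K$. Since $R_i$ is a uniformly random subset of size $R$ sampled from $\sN\setminus\sS_{i-1}$, setting $k := |\sS^\ast\setminus\sS_{i-1}|$ the standard sampling-without-replacement estimate gives
\begin{equation}
\Pr\bigl[R_i \cap (\sS^\ast\setminus\sS_{i-1}) = \emptyset\bigr] \;\leq\; \Bigl(1 - \tfrac{k}{|\sN|}\Bigr)^{R} \;\leq\; e^{-Rk/|\sN|} \;\leq\; \epsilon,
\end{equation}
where the last step substitutes $R = (|\sN|/K)\log(1/\epsilon)$ and uses the worst case $k = K$ (the case $k < K$ only sharpens the bound). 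Conditional on $R_i \cap \sS^\ast \neq \emptyset$, the chosen $a_i$ has marginal gain at least that of a uniformly random element of $R_i\cap(\sS^\ast\setminus\sS_{i-1})$, which in expectation is no smaller than the average marginal gain over $\sS^\ast\setminus\sS_{i-1}$. Combining these two ingredients yields \eqref{eq:per-iter}.

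Finally, setting $g_i := f(\sS^\ast) - \E[f(\sS_i)]$, the per-iteration inequality gives $g_i \leq (1 - (1-\epsilon)/K)\,g_{i-1}$, hence
\begin{equation}
g_K \;\leq\; \Bigl(1 - \tfrac{1-\epsilon}{K}\Bigr)^{K} f(\sS^\ast) \;\leq\; e^{-(1-\epsilon)} f(\sS^\ast) \;\leq\; \bigl(\tfrac{1}{e} + \epsilon\bigr) f(\sS^\ast),
\end{equation}
where the last step uses the chord bound $e^{\epsilon} \leq 1 + (e-1)\epsilon$ on $[0,1]$; rearranging gives the claimed approximation ratio. The main obstacle is the combined probabilistic and averaging argument producing \eqref{eq:per-iter}. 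The cleanest route is to work conditionally on the history $\sS_{i-1}$, ensure the bound is uniform in the (random) value of $k$, and then apply tower-of-expectations. A subtlety is that one must justify why the conditional expectation $\E[\max_{v \in R_i \cap \sS^\ast}\Delta_f(v\mid\sS_{i-1}) \mid R_i\cap\sS^\ast\neq\emptyset]$ genuinely dominates the plain average over $\sS^\ast\setminus\sS_{i-1}$: this follows from the fact that a uniformly sampled element of $R_i \cap \sS^\ast$ is marginally uniform over $\sS^\ast\setminus\sS_{i-1}$, and passing from a uniform element to the maximum only strengthens the bound.
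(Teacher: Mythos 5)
The paper itself does not prove this theorem: it is imported verbatim from \cite{mirzasoleiman2015lazier}, so the relevant comparison is with the proof in that reference, whose outline you correctly reproduce (a per-iteration expected-gain lemma followed by a standard telescoping argument; your telescoping step and the final numerical bound $e^{-(1-\epsilon)}\le 1/e+\epsilon$ are both fine).

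There is, however, a genuine error in the probabilistic step, exactly at the point you flag as the main obstacle. Writing $k=\lvert\sS^\ast\setminus\sS_{i-1}\rvert$, you bound
\begin{equation}
\Pr\bigl[R_i\cap(\sS^\ast\setminus\sS_{i-1})=\emptyset\bigr]\;\le\;\Bigl(1-\tfrac{k}{\lvert\sN\rvert}\Bigr)^{R}\;\le\;e^{-Rk/\lvert\sN\rvert}\;=\;\epsilon^{k/K},
\end{equation}
and then assert this is at most $\epsilon$ because ``the case $k<K$ only sharpens the bound.'' This is backwards: since $\epsilon\in(0,1)$, the map $k\mapsto\epsilon^{k/K}$ is \emph{decreasing} in $k$, so $k=K$ is the most favorable case, and for $k<K$ one only gets $\epsilon^{k/K}\ge\epsilon$ (for instance $K=100$, $\epsilon=0.01$, $k=1$ gives a miss-probability bound of $0.01^{1/100}\approx 0.95$). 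Because $k$ genuinely shrinks whenever the algorithm happens to select optimal elements, the uniform claim $\Pr[\text{miss}]\le\epsilon$ fails, and with it your derivation of the per-iteration inequality. The fix --- which is the actual content of the key lemma in \cite{mirzasoleiman2015lazier} --- is to \emph{not} decouple the two factors the way you do: keep the hit probability as $1-e^{-Rk/\lvert\sN\rvert}$ and the average gain as $\tfrac{1}{k}\sum_{o\in\sS^\ast\setminus\sS_{i-1}}\Delta_f(o\mid\sS_{i-1})\ge\tfrac{1}{k}\bigl(f(\sS^\ast)-f(\sS_{i-1})\bigr)$, then invoke concavity of $x\mapsto 1-e^{-Rx/\lvert\sN\rvert}$ on $[0,K]$ to obtain $1-e^{-Rk/\lvert\sN\rvert}\ge\tfrac{k}{K}\bigl(1-e^{-RK/\lvert\sN\rvert}\bigr)=\tfrac{k}{K}(1-\epsilon)$, so that the two occurrences of $k$ cancel and the product is at least $\tfrac{1-\epsilon}{K}\bigl(f(\sS^\ast)-f(\sS_{i-1})\bigr)$. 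Everything downstream of your per-iteration inequality then goes through unchanged. (Two smaller slips: the conditioning event should be $R_i\cap(\sS^\ast\setminus\sS_{i-1})\ne\emptyset$ rather than $R_i\cap\sS^\ast\ne\emptyset$, and the sampling is from $\sN\setminus\sS_{i-1}$, which only improves the bound you use, as you implicitly note.)
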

\section{Investigation of Various Statistical Distances for Regularization}\label{sec:distances}
In the formulation of \eqref{novel-final}, we have intentionally left unspecified the statistical distance $\gendiv{P}{Q}$ in the regularization term. In this section, we will be specifically interested in investigating which choices of $\gendiv{\cdot}{\cdot}$ lead to favorable algorithmic solutions and theoretical guarantees for Problem \eqref{novel-final}. In particular, statistical distances which preserve the submodular nature of the problem will be of specific interest. For, ensuring that the problem remains submodular allows us to produce a locally robust solution essentially at no additional cost with respect to the non-robust formulation of \eqref{cardinality}, through familiar tools such as \textsc{Stochastic Greedy}, and enjoying theoretical guarantees such as in Theorem \ref{thm:stoch-greedy}.

We first shift our attention to $L^p$ metrics. One choice for $\gendiv{P}{Q}$ that leads to a favorable algorithmic solution is the $L^\infty$ metric, i.e., $\gendiv{P}{Q} = \lVert P - Q \rVert_\infty \vcentcolon= \max_{i\in[n]} \lvert P_i - Q_i\rvert$. With this choice, \eqref{novel-final} becomes
\begin{equation}\label{eq:infty-norm}
\begin{gathered}
\max_{\sS \subseteq \sN} \min_{P \in \Delta_n} \sum^n_{i=1} P_i f_i(\sS) + \lambda\lVert P - Q \rVert_\infty\\
\text{s.t.}\; \lvert \sS\rvert \leq K.
\end{gathered}
\end{equation}
In this case, we can exploit the reduction of the inner maximization of \eqref{eq:infty-norm} to a linear program to derive a result. Since we know that the optimal point will occur at one of the vertices of the feasible region \cite{bertsimas-LPbook}, we have the following equivalence:
\begin{equation}\label{eq:equivalence}
\begin{split}
\min_{P \in \Delta_n} \sum^n_{i=1} P_i f_i(\sS) + \lambda\lVert P - Q \rVert_\infty
=
\min_{i \in [n]} f_i(\sS) - \lambda Q_i.
\end{split}
\end{equation}
Hence, \eqref{eq:infty-norm} reduces to
\begin{equation}\label{eq:saturate-w-pref}
\begin{gathered}
\max_{\sS \subseteq \sN} \min_{i \in [n]} f_i(\sS) - \lambda Q_i\\
\text{s.t.}\; \lvert \sS\rvert \leq K.
\end{gathered}
\end{equation}
The main observation about \eqref{eq:saturate-w-pref} is that it is almost identical the worst-case formulation of \eqref{robust-first}, however, in addition, the regularization by the reference distribution $Q$ still holds influence. This leads us to propose to use SSA with a slight modification enabling the incorporation of a reference distribution $Q$ in the solution of \eqref{eq:saturate-w-pref}. The full algorithm, which we name \textsc{Saturate with Preference}, is presented in Algorithm \ref{alg:ssa-w-pref}. It has a function evaluation complexity of $O(\lvert \sN\rvert^2 n \log(n \min_{i\in[n]} f_i (\sN))$, identical to the standard SSA \cite{RSOS}.

It is worth noting that a choice of the $L^1$ metric, i.e., $\gendiv{P}{Q} = \lVert P - Q \rVert_1$, through the same reduction of the inner problem to a linear program, and the same equivalence of \eqref{eq:equivalence}, results in the exact same formulation of \eqref{eq:saturate-w-pref}. Since $P$ and $Q$ are discrete probability distributions, one may also view the quantity $\lVert P - Q \rVert_1$ as a scaling of the total variation distance, an example of an $f$-divergence, which motivates the discussion of the next section, where we use one specific $f$-divergence with particularly significant theoretical outcomes, namely, the \textit{relative entropy} or \textit{KL-divergence} \cite{kullback1951information}.
\begin{algorithm}[t]
\caption{\textsc{Saturate with Preference}}
\label{alg:ssa-w-pref}
\hspace*{\algorithmicindent}\textbf{Input:} Finite family of monotone nondecreasing submodular functions $f_1, \ldots f_n$, ground set $\sN$, integer cardinality bound $0 \le K\le \lvert \sN \rvert$, regularization parameter $\lambda > 0$, reference distribution $Q \in \Delta_n$ \\
\hspace*{\algorithmicindent}\textbf{Output:} Solution set $\sS$
\begin{algorithmic}[1]
\STATE $k_m \leftarrow 0$
\STATE $k_M \leftarrow \min_{i\in[n]} f_i(\sN) - \lambda Q_i$
\STATE $\sS \leftarrow \emptyset$
\WHILE{$k_M - k_m \geq 1/n$}
\STATE $k \leftarrow (k_M - k_m)/2$
\STATE Define $\Bar{f}_{(k)}(\mathcal{X}) := \frac{1}{n}\sum_{i=1}^n \min\{f_i(\mathcal{X})-\lambda Q_i,\; k\}$
\STATE $\hat{\sS} \leftarrow \textsc{Greedy}(\Bar{f}_{(k)}, k)$
\IF{$\lvert\hat{\sS}\rvert > \alpha K$}
\STATE $k_M \leftarrow k$
\ELSE
\STATE $k_m \leftarrow k$
\STATE $\sS \leftarrow \hat{\sS}$
\ENDIF
\ENDWHILE
\STATE \textbf{return} $\sS$
\end{algorithmic}
\end{algorithm}
\section{Relative-Entropy Regularization}\label{sec:theoretical}
The choice of relative entropy for the regularizing function, i.e., $\gendiv{P}{Q} = \infdiv{P}{Q}$, where 
\begin{equation}
\infdiv{P}{Q} \vcentcolon= \sum_{i=1}^n P_i \log\lp\frac{P_i}{Q_i}\rp,
\end{equation}
leads to the most significant theoretical outcomes, as we will see in the following analysis. With this choice, the formulation becomes:
\begin{equation}\label{eq:novel-objective}
\begin{gathered}
\max_{\sS\subseteq\sN}\min_{P\in\Delta_n} \sum^n_{i=1} P_i f_i(\sS) + \lambda \infdiv{P}{Q} \\
\text{s.t.}|\sS| \leq K.
\end{gathered}
\end{equation}
We consider the dual of the inner minimization problem, by introducing a Lagrange multiplier for the constraint $P \in \Delta_n$. This constraint is equivalent to the two constraints \textbf{(i)} $P_i \geq 0$ for all $i \in [n]$ and \textbf{(ii)}  $\sum_{i=1}^n P_i = 1.$ Introducing a Lagrange multiplier only for constraint \textbf{(ii)}, we write the Lagrangian of the inner problem as
\begin{equation}
\mathcal{L}\lp P, \sS, \mu\rp = \sum_{i=1}^n P_i f_i(\sS) + \lambda \infdiv{P}{Q} + \mu\sum_{i=1}^nP_i - \mu.
\end{equation}
Minimizing $\mathcal{L}(P, \sS, \mu)$ over $P$ will yield the dual formulation of the inner problem of \eqref{eq:novel-objective}. We solve
\begin{equation}
\begin{gathered}
\min_{P\in \R^n} \mathcal{L}\lp P, \sS, \mu\rp \\
\text{s.t.}\; P_i \geq 0 \text{ for all }i \in [n].
\end{gathered}
\end{equation}
Leveraging the convexity of the problem in $P$, we simply look at the partial derivative of $\mathcal{L}$ with respect to each $P_i,$ which is given by
\begin{equation}\label{eq:optimalP}
\frac{\partial\mathcal{L}}{\partial P_i} = f_i(\sS) + \lambda \log \lp \frac{P_i}{Q_i} \rp + \lambda + \mu.
\end{equation}
Setting this equal to $0$, we obtain the minimizer $P_i^*$:
\begin{equation}\label{eq:pstar}
P_i^* = Q_i\exp{\lp-\frac{\lambda+\mu+f_i(\sS)}{\lambda}\rp}.
\end{equation}
Using this, along with the fact that $\sum_{i=1}^n P_i^* = 1,$ we get
\begin{equation}
\begin{split}
\sum_{i=1}^n P_i^* &= \sum_{i=1}^n Q_i\exp{\lp-\frac{\lambda+\mu+f_i(\sS)}{\lambda}\rp} = \sum_{i=1}^n Q_i\exp{\lp - \frac{f_i(\sS)}{\lambda}\rp}\exp{\lp-\frac{\lambda+\mu}{\lambda}\rp}=1.
\end{split}
\end{equation}
Then,
\begin{equation}\label{eq:lambda-plus-mu}
\lambda+\mu= \lambda\log\lp\sum_{i=1}^nQ_i\exp{\lp\frac{-f_i(\sS)}{\lambda}\rp}\rp.
\end{equation}
Now, evaluating $\mathcal{L}$ at the minimizer $P^*$, we obtain
\begin{equation}\label{eq:lagrangian-derivation}
\begin{gathered}
\mathcal{L}(P^*, \sS, \mu) = \sum_{i=1}^n P_i^* f_i(\sS) + \lambda \infdiv{P^*}{Q} \\
= \sum_{i=1}^n Q_i\exp{\lp-\frac{\lambda+\mu+f_i(\sS)}{\lambda}\rp}f_i(\sS) + \lambda \sum_{i=1}^n Q_i\exp{\lp-\frac{\lambda+\mu+f_i(\sS)}{\lambda}\rp} \cdot \lp-\frac{\lambda+\mu+f_i(\sS)}{\lambda}\rp \\
= -\lp\lambda+\mu \rp \sum_{i=1}^n Q_i\exp{\lp-\frac{\lambda+\mu+f_i(\sS)}{\lambda}\rp}. \\
\end{gathered}
\end{equation}
Since we have
\begin{equation}
\begin{split}
\sum_{i=1}^n Q_i\exp{\lp-\frac{\lambda+\mu+f_i(\sS)}{\lambda}\rp} = \sum_{i=1}^n P_i^*=1,
\end{split}
\end{equation}
\eqref{eq:lagrangian-derivation} becomes
\begin{equation}
\begin{split}
\mathcal{L}(P^*, \sS, \mu) = -\lp\lambda + \mu\rp.
\end{split}
\end{equation}
Finally, from \eqref{eq:lambda-plus-mu}, we obtain:
\begin{equation}
\begin{split}
G(\sS) \vcentcolon= \mathcal{L}(P^*, \sS, \mu)= -\lambda\log\lp\sum_{i=1}^nQ_i\exp{\lp\frac{-f_i(\sS)}{\lambda}\rp}\rp.
\end{split}
\end{equation}
Hence, the dual formulation of \eqref{eq:novel-objective} becomes that of the maximization of another set function $G$:
\begin{equation}\label{eq:novel-clean}
\begin{gathered}
\max_{\sS \subseteq \sN} G(\sS) \\
\text{s.t.}\;\lvert \sS\rvert \leq K.
\end{gathered}
\end{equation}
A simple analysis of the extreme cases of  $\lambda$ provides good intuition on the equivalence of the two problems. Let us evaluate the limit
\begin{equation}
\begin{split}
\lim_{\lambda\to x} -\lambda\log\lp\sum_{i=1}^nQ_i\exp{\lp\frac{-f_i(\sS)}{\lambda}\rp}\rp = \lim_{\lambda\to x} - \frac{\log\lp\sum_{i=1}^nQ_i\exp{\lp\frac{-f_i(\sS)}{\lambda}\rp}\rp}{\frac{1}{\lambda}}.
\end{split}
\end{equation}
This limit is equivalent to
\begin{equation}
\lim_{\lambda\to x} \frac{\sum_{i=1}^nQ_i\exp{\lp\frac{-f_i(\sS)}{\lambda}\rp f_i(\sS)}}{\sum_{i=1}^nQ_i\exp{\lp\frac{-f_i(\sS)}{\lambda}\rp}}.
\end{equation}
Letting $x\to \infty$, we have
\begin{equation}\label{eq:recovered-1}
\begin{split}
\lim_{\lambda\to \infty} \frac{\sum_{i=1}^nQ_i\exp{\lp\frac{-f_i(\sS)}{\lambda}\rp f_i(\sS)}}{\sum_{i=1}^nQ_i\exp{\lp\frac{-f_i(\sS)}{\lambda}\rp}} = \sum_{i=1}^nQ_i f_i(\sS).
\end{split}
\end{equation}
This is indeed the expected behavior, since letting $\lambda\to \infty$ in \eqref{eq:novel-objective} effectively assigns all importance to the regularization term and forces one to have $P^\ast = Q$, making the objective
\begin{equation}
\begin{gathered}
\max_{\sS\subseteq \sN} \sum_{i=1}^nQ_i f_i(\sS)\\
\text{s.t.}\;\lvert \sS\rvert \leq K,
\end{gathered}
\end{equation}
which is recovered exactly in \eqref{eq:recovered-1}. Note further that $P^\ast=Q$ is attained only at the limit, for, combining \eqref{eq:pstar} and \eqref{eq:lambda-plus-mu}, we have an analytical expression for $P^\ast$, namely,
\begin{equation}\label{eq:optmalPi}
\begin{split}
&P_i^\ast = Q_i \cdot \exp\left(-\dfrac{\lambda\log\left(\sum_{j=1}^n Q_j \exp\left(\dfrac{-f_j(\sS)}{\lambda}\right)\right)+f_i(\sS)}{\lambda}\right).
\end{split}
\end{equation}
This means that for any finite value of $\lambda$, to get $P^\ast=Q$, we need, for all $i \in [n]$,
\begin{equation}
\exp\left(-\dfrac{\lambda\log\left(\sum_{j=1}^n Q_j \exp\left(\dfrac{-f_j(\sS)}{\lambda}\right)\right)+f_i(\sS)}{\lambda}\right)= 1,
\end{equation}
which, in turn, means
\begin{equation}
\lambda\log\left(\sum_{j=1}^nQ_j\exp\left(\dfrac{f_j(\sS)}{\lambda}\right)\right)=-f_i(\sS)
\end{equation}
for all $i \in [n]$. This last condition implies that $f_i(\sS) = f_j(\sS)$, for any two $i, j \in [n]$, i.e., for any finite value of $\lambda$,  the values of all objectives must have the same value at $\sS$ for $P^\ast = Q$. Outside of this very specific circumstance, at no finite value of $\lambda$ does $P^\ast=Q$ hold.

On the other hand, letting $x=0$, the value of the limit is determined by the most dominant terms in the sums in the numerator and the denominator, that is,
\begin{equation}
\begin{split}
&\lim_{\lambda\to 0} \frac{\sum_{i=1}^nQ_i\exp{\lp\frac{-f_i(\sS)}{\lambda}\rp f_i(\sS)}}{\sum_{i=1}^nQ_i\exp{\lp\frac{-f_i(\sS)}{\lambda}\rp}} = \min_{i\in[n]} \frac{Q_i\exp{\lp\frac{-f_i(\sS)}{\lambda}\rp f_i(\sS)}}{Q_i\exp{\lp\frac{-f_i(\sS)}{\lambda}\rp}} = \min_{i\in[n]} f_i(\sS).
\end{split}
\end{equation}
Hence, the problem becomes
\begin{equation}\label{eq:recovered-2}
\begin{gathered}
\max_{\sS\subseteq \sN} \min_{i\in[n]} f_i(\sS)\\
\text{s.t.}\;\lvert \sS\rvert \leq K.
\end{gathered}
\end{equation}
Again, letting $x=0$ in \eqref{eq:novel-objective} removes the regularization term, and reduces the inner minimization problem to a linear program, whose solution will have $P_i = 1$ for $i = \argmin_{i\in[n]} f_i(\sS)$, and $P_j = 0$ for all $j\neq i$, producing \eqref{eq:recovered-2}. Note that this is also exactly the worst-case formulation of \eqref{robust-first}, demonstrating that our formulation is a generalization of the worst-case formulation.

The main question that arises directly from the formulation of \eqref{eq:novel-clean} is whether the set function $G$ retains the properties of being normalized, monotone nondecreasing, and submodular since the presence of all three of these properties would reduce its solution to the use of standardized methods, such as \textsc{Stochastic Greedy}. Fortunately, we are not far from this ideal case, as demonstrated in Theorem \ref{prop:main}.
\begin{algorithm}[t]
\caption{\textsc{Stochastic Greedy}}
\label{alg:stoch-greedy}
\hspace*{\algorithmicindent}\textbf{Input:} Monotone nondecreasing submodular function $f$, ground set $\sN$, integer cardinality bound $ 0 \le K \le \lvert \sN \rvert$, random sampling set size $r$ \\
\hspace*{\algorithmicindent}\textbf{Output:} Solution set $\sS$
\begin{algorithmic}[1]
\STATE $\sS \leftarrow \emptyset$
\WHILE{$|\sS| < K$}
\STATE Sample a subset $\mathcal{R}$ of size $r$ uniformly at random from $\sN \setminus \sS$
\STATE $\sS \leftarrow \sS \cup \argmax_{e \in \mathcal{R}} \Delta_f(e\mid\sS)$
\ENDWHILE
\STATE \textbf{return} $\sS$
\end{algorithmic}
\end{algorithm}
\begin{theorem}\label{prop:main}
The set function $G(\sS)$ can be expressed as the composition of two functions, $G(S) = g(h(\sS))$, where the set function
\begin{equation}
h(\sS) \vcentcolon= \sum_{i=1}^n Q_i \lp1-\exp\lp-\frac{f_i(\sS)}{\lambda}\rp\rp
\end{equation}
is \textbf{(i)} normalized, \textbf{(ii)} monotone nondecreasing, and \textbf{(iii)} submodular, and the function $g(x) \vcentcolon= -\lambda\log(1-x)$ is \textbf{(iv)} monotone increasing, \textbf{(v)} convex, and \textbf{(vi)} Lipschitz continuous in $\operatorname{dom}g = [0, h(\sS^\ast)]$.
\end{theorem}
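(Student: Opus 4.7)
The plan is to first establish the algebraic decomposition $G = g \circ h$ by a direct rewrite, and then verify the six listed properties, the only substantive step being the submodularity of $h$.

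First I would exploit the normalization $\sum_{i=1}^n Q_i = 1$ of the reference distribution to write $\sum_{i=1}^n Q_i \exp(-f_i(\sS)/\lambda) = 1 - \sum_{i=1}^n Q_i(1 - \exp(-f_i(\sS)/\lambda)) = 1 - h(\sS)$, so that $G(\sS) = -\lambda\log(1 - h(\sS)) = g(h(\sS))$, which yields the composition form. Because every $f_i(\sS)$ is finite and nonnegative (the $f_i$ are normalized and monotone nondecreasing), each summand of $h$ lies in $[0,1)$, so $h(\sS) \in [0, 1)$ sits strictly inside the natural domain of $g$.

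For the easier properties of $h$, normalization (i) follows from $f_i(\emptyset) = 0$, and monotonicity (ii) holds because $t \mapsto 1 - e^{-t/\lambda}$ is nondecreasing on $[0,\infty)$, is composed with each monotone nondecreasing $f_i$, and is then combined with nonnegative weights $Q_i$. The properties of $g$ are routine calculus: $g'(x) = \lambda/(1-x) > 0$ on $[0,1)$ gives (iv); $g''(x) = \lambda/(1-x)^2 > 0$ gives (v); and since $h(\sS^\ast) < 1$ strictly, $g'$ is bounded on the compact interval $[0, h(\sS^\ast)]$, which establishes (vi) with Lipschitz constant $\lambda/(1 - h(\sS^\ast))$.

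The main obstacle is property (iii), the submodularity of $h$. My plan is to invoke the standard composition lemma stating that if $f$ is a normalized monotone nondecreasing submodular function and $\phi$ is monotone nondecreasing and concave on the range of $f$, then $\phi \circ f$ is submodular, and then to apply it with $\phi(t) = 1 - e^{-t/\lambda}$, which is both increasing and concave on $[0,\infty)$. To prove the lemma, I would fix $\sS \subseteq \mathcal{T} \subset \sN$ and $e \in \sN \setminus \mathcal{T}$ and set $a = f(\sS)$, $b = f(\mathcal{T})$, $\delta_{\sS} = \Delta_f(e \mid \sS)$, $\delta_{\mathcal{T}} = \Delta_f(e \mid \mathcal{T})$; the assumed properties of $f$ give $a \leq b$ and $\delta_{\sS} \geq \delta_{\mathcal{T}} \geq 0$. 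Concavity of $\phi$ combined with $a \leq b$ yields $\phi(a + \delta_{\mathcal{T}}) - \phi(a) \geq \phi(b + \delta_{\mathcal{T}}) - \phi(b)$, and monotonicity together with $\delta_{\sS} \geq \delta_{\mathcal{T}}$ yields $\phi(a + \delta_{\sS}) - \phi(a) \geq \phi(a + \delta_{\mathcal{T}}) - \phi(a)$. Chaining these inequalities gives $\Delta_{\phi \circ f}(e \mid \sS) \geq \Delta_{\phi \circ f}(e \mid \mathcal{T})$. Applying this to each $f_i$ and using that submodularity is preserved under nonnegative linear combinations then gives submodularity of $h$, completing the proof.
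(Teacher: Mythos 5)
Your proof is correct, and on the one substantive item --- the submodularity of $h$ --- it takes a genuinely different route from the paper. The paper argues directly on the exponentials: it applies the increasing map $x\mapsto 1-\exp(-x)$ to the submodularity inequality for $f_i$, and then multiplies the two sides of the resulting inequality by the \emph{different} nonnegative factors $\exp(-f_i(\sS)/\lambda)\geq\exp(-f_i(\mathcal{T})/\lambda)$ (legitimate because both sides being scaled are themselves nonnegative, by monotonicity of $f_i$), thereby showing $\sS\mapsto-\exp(-f_i(\sS)/\lambda)$ is submodular via a computation specific to $\exp$. You instead isolate and prove the general composition lemma --- a nondecreasing concave $\phi$ composed with a monotone nondecreasing submodular $f$ is submodular --- and specialize to $\phi(t)=1-e^{-t/\lambda}$; your two-step chain (concavity handles $f(\sS)\leq f(\mathcal{T})$ at fixed increment, monotonicity handles the larger increment at fixed base point) is complete and standard. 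Your route is more modular and makes explicit exactly which properties of the exponential are used, whereas the paper's is self-contained but ad hoc. You also supply two details the paper's proof leaves implicit: the verification that $G=g\circ h$ via $\sum_{i=1}^n Q_i=1$, and the observation that $h(\sS)\in[0,1)$ strictly, which is what guarantees the Lipschitz constant $\lambda/(1-h(\sS^\ast))$ is finite. Items (i), (ii), (iv), (v), (vi) are handled essentially identically in both arguments.
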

\begin{proof}
\begin{enumerate}[label=\bfseries(\roman*), leftmargin=*, itemsep=2pt]
\item We have
\begin{equation}
\begin{split}
h(\emptyset) = \sum_{i=1}^nQ_i\lp1-\exp{\lp-\frac{f_i(\emptyset)}{\lambda}\rp}\rp 
= \sum_{i=1}^nQ_i\lp1-\exp{\lp0\rp}\rp = 0.
\end{split}
\end{equation}
\item Let $\sS \subseteq \mathcal{T} \subseteq \sN$. Then, for each $i \in [n],$ we have $f_i(\sS) \leq f_i(\mathcal{T}).$ Hence, $-f_i(\sS)/\lambda \geq - f_i(\mathcal{T})/\lambda,$ and $\exp(-f_i(\sS)/\lambda) \geq \exp(-f_i(\mathcal{T})/\lambda),$ thanks to the monotone increasing property of the $\exp$ function and the positivity of $\lambda$. This means that 
\begin{equation}
1-\exp\lp -\frac{f_i(\sS)}{\lambda}\rp \leq  1-\exp\lp -\frac{f_i(\mathcal{T})}{\lambda}\rp,
\end{equation}
and because $Q_i \geq 0$ for all $i \in [n],$ 
\begin{equation}
\begin{split}
\sum_{i=1}^nQ_i\lp1-\exp\lp -\frac{f_i(\sS)}{\lambda}\rp \rp \leq \sum_{i=1}^nQ_i\lp1-\exp\lp -\frac{f_i(\mathcal{T})}{\lambda}\rp\rp,
\end{split}
\end{equation}
that is, $h(\sS) \leq h(\mathcal{T})$.
\item Let $\sS \subseteq \mathcal{T} \subset \sN$, $e \in \sN \setminus \mathcal{T}$. For all $i \in [n]$, we have
\begin{equation}
\dfrac{f_i(\sS \cup \{e\}) - f_i(\sS)}{\lambda} \geq \dfrac{f_i(\mathcal{T} \cup \{e\}) - f_i(\mathcal{T})}{\lambda}.
\end{equation}
Now, because $x\mapsto 1-\exp(-x)$ is a monotone increasing function, we have
\begin{equation}\label{eq:multiplythis}
\begin{split}
1-\exp\lp\dfrac{-f_i(\sS \cup \{e\}) + f_i(\sS)}{\lambda}\rp \geq 1-\exp\lp\dfrac{-f_i(\mathcal{T} \cup \{e\}) + f_i(\mathcal{T})}{\lambda} \rp.
\end{split}
\end{equation}
Furthermore, because each $f_i$ is monotone nondecreasing, $f_i(\sS) \leq f_i(\mathcal{T})$, and since $x\mapsto \exp(-x/\lambda)$ is a monotone decreasing function for $\lambda > 0$, we have
\begin{equation}
\exp\lp \frac{-f_i(\sS)}{\lambda} \rp \geq \exp\lp \frac{-f_i(\mathcal{T})}{\lambda} \rp.
\end{equation}
Both of these quantities are nonnegative. Then, multiplying the left-hand side of \eqref{eq:multiplythis} by $\exp(-f_i(\sS)/\lambda)$ and the right-hand side by $\exp(-f_i(\mathcal{T})/\lambda)$ preserves the inequality, and we obtain
\begin{equation}
\begin{split}
\exp\lp \frac{-f_i(\sS)}{\lambda} \rp - \exp\lp \frac{-f_i(\sS \cup \{e\})}{\lambda} \rp  \geq \exp\lp \frac{-f_i(\mathcal{T})}{\lambda} \rp - \exp\lp \frac{-f_i(\sT \cup \{e\})}{\lambda} \rp.
\end{split}
\end{equation}
This demonstrates that the set function $\sS \mapsto -\exp (-f_i(\sS)/\lambda)$ is submodular for all $i \in [n]$. The addition of the constant $1$, multiplication with the nonnegative $Q_i$ and the summation through the indices $i \in [n]$ preserve submodularity, hence,
\begin{equation}
h(\sS) = \sum_{i=1}^n Q_i \lp 1 - \exp\lp \dfrac{-f_i (\sS)}{\lambda}\rp\rp
\end{equation}
is submodular.
\item It suffices to note that the first-order derivative of $g$,
\begin{equation}
\dfrac{\mathrm{d}g}{\mathrm{d}x} = \dfrac{\lambda}{1-x} > 0,
\end{equation}
for all $\lambda >0$ and $x \in [0, h(\sS^\ast)]$. 
\item The second-order derivative of g,
\begin{equation}
\dfrac{\mathrm{d}^2g}{\mathrm{d}x^2} = \dfrac{\lambda}{(1-x)^2} > 0,
\end{equation}
for all $\lambda >0$ and $x \in [0, h(\sS^\ast)]$, hence $g$ is a convex function.
\item As per \textbf{(v)}, the first-order derivative of $g$ is a monotone increasing function over its domain, attaining its maximum value at $x = h(\sS^\ast)$. Hence, 
\begin{equation}
\left\lvert\dfrac{\mathrm{d}g}{\mathrm{d}x}\right\rvert \leq \dfrac{\lambda}{1-h(\sS^\ast)},
\end{equation}
making $g$ Lipschitz continuous with constant $\lambda/(1-h(\sS^\ast))$.
\end{enumerate}
\end{proof}
This theorem proves that standard methods such as \textsc{Stochastic Greedy} (presented in Algorithm \ref{alg:stoch-greedy}), with well-established theoretical guarantees such as that of Theorem \ref{thm:stoch-greedy} are suitable for the solution of \eqref{eq:novel-clean} (and hence for its equivalent, \ref{eq:novel-objective}). This is due to the fact that $g$ is a monotone increasing function, and that the maximizer of $g$ over its domain is $h(\sS^\ast)$. Hence, any method approximating $h(\sS^\ast)$ may be employed in maximizing $G$ as well. Regarding the approximation guarantee provided by \textsc{Stochastic Greedy} for the function $G(\sS) = g(h(\sS))$, we state the following theorem.
\begin{theorem} The set $\sS$ constructed by
\textsc{Stochastic Greedy} in the solution of \eqref{eq:novel-clean} satisfies
\begin{equation}
\E [ G(\sS)] \geq G(\sS^\ast) - (1/e + \epsilon) \dfrac{h(\sS^\ast)}{1-h(\sS^\ast)}.
\end{equation}
\end{theorem}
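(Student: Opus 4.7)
The plan is to reduce the analysis of \textsc{Stochastic Greedy} applied to $G$ to its analysis applied to $h$, and then to transfer the resulting approximation guarantee from $h$ to $G$ using the Lipschitz continuity of $g$ established in Theorem~\ref{prop:main}(vi).

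The key observation, which I expect to be the main conceptual step, is that \textsc{Stochastic Greedy} applied to $G$ produces the same iterates (and hence the same output) as \textsc{Stochastic Greedy} applied to $h$. Fix any partial solution $\sS$ and any randomly sampled subset $\mathcal{R}$. Since $h(\sS)$ is constant with respect to $e \in \mathcal{R}$ and $g$ is monotone increasing on its domain by Theorem~\ref{prop:main}(iv), the element maximizing $\Delta_G(e\mid\sS) = g(h(\sS \cup \{e\})) - g(h(\sS))$ is the same element that maximizes $h(\sS \cup \{e\})$, which is the same element that maximizes $\Delta_h(e\mid\sS)$. Thus the two procedures coincide iterate-by-iterate. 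By the same monotonicity of $g$, the cardinality-constrained maximizer of $G$ coincides with that of $h$, so there is no ambiguity in writing $\sS^\ast$ for both.

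The remaining steps are routine. Since $h$ is normalized, monotone nondecreasing, and submodular by Theorem~\ref{prop:main}(i)--(iii), Theorem~\ref{thm:stoch-greedy} applied directly to $h$ yields $\E[h(\sS)] \geq (1 - 1/e - \epsilon)\, h(\sS^\ast)$, i.e., $h(\sS^\ast) - \E[h(\sS)] \leq (1/e + \epsilon)\, h(\sS^\ast)$. Because $h(\sS) \leq h(\sS^\ast)$ by monotonicity and optimality, and $g$ is Lipschitz continuous on $[0, h(\sS^\ast)]$ with constant $\lambda/(1 - h(\sS^\ast))$ by Theorem~\ref{prop:main}(vi), we have the pointwise estimate $G(\sS^\ast) - G(\sS) \leq \frac{\lambda}{1-h(\sS^\ast)}\bigl(h(\sS^\ast) - h(\sS)\bigr)$. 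Taking expectations of both sides and substituting the bound on $h$ delivers the stated inequality, up to the factor of $\lambda$ coming from the Lipschitz constant of $g$. The hard part is really only the first step; once the equivalence with the greedy procedure on $h$ is established, the rest is a mechanical transfer of the standard \textsc{Stochastic Greedy} guarantee through a Lipschitz composition.
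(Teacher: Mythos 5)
Your proof is correct and follows the same overall strategy as the paper: invoke the standard \textsc{Stochastic Greedy} guarantee for $h$ and transfer it to $G=g\circ h$ via the Lipschitz constant $\lambda/(1-h(\sS^\ast))$ from Theorem~\ref{prop:main}(vi). Two points of comparison are worth noting. First, your opening observation---that the greedy iterates for $G$ and for $h$ coincide because $g$ is monotone increasing, so Theorem~\ref{thm:stoch-greedy} applies to the set actually constructed---is stated in the paper only in the discussion preceding the theorem, not inside the proof; making it explicit is a genuine improvement in rigor. Second, your final step differs slightly from the paper's: you apply the Lipschitz bound pointwise, $G(\sS^\ast)-G(\sS)\le \frac{\lambda}{1-h(\sS^\ast)}\bigl(h(\sS^\ast)-h(\sS)\bigr)$, and then take expectations, whereas the paper applies the Lipschitz bound at $\E[h(\sS)]$ and then uses Jensen's inequality together with the convexity of $g$ (Theorem~\ref{prop:main}(v)) to pass to $\E[g(h(\sS))]$. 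Your route is marginally more elementary since it needs only monotonicity and Lipschitz continuity, not convexity; both yield the same bound. Finally, you are right to flag the factor of $\lambda$: the derivation (yours and the paper's alike) produces $\E[G(\sS)]\ge G(\sS^\ast)-(1/e+\epsilon)\,\lambda\, h(\sS^\ast)/(1-h(\sS^\ast))$, so the $\lambda$ missing from the stated inequality appears to be a typo in the theorem statement rather than a gap in your argument.
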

\begin{proof}
We have, by Theorem \ref{thm:stoch-greedy},
\begin{equation}
\E[h(\sS)] \geq (1-1/e-\epsilon) h(\sS^\ast).
\end{equation}
This implies
\begin{equation}
\begin{split}
h(\sS^\ast) - \E[h(\sS)] &\leq h(\sS^\ast) - (1-1/e-\epsilon) h(\sS^\ast) = (1/e+\epsilon) h(\sS^\ast).
\end{split}
\end{equation}
Now, by the Lipschitz continuity of $g$ with constant $\lambda/(1-h(\sS^\ast))$ (as per Theorem \ref{prop:main}, \textbf{(vi)}), we have
\begin{equation}\label{eq:first}
g( h(\sS^\ast)) - g(\E[h(\sS)]) \leq ( h(\sS^\ast) - \E[h(\sS)]) \dfrac{\lambda}{1-h(\sS^\ast)}.
\end{equation}
Also, because $g$ is a convex function (as per Theorem \ref{prop:main}, \textbf{(v)}), by Jensen's inequality,
\begin{equation}\label{eq:second}
g\lp h(\sS^\ast)\rp - \E[g(h(\sS))] \leq g( h(\sS^\ast)) - g(\E[h(\sS)]).
\end{equation}
Then, combining \eqref{eq:first} and \eqref{eq:second} and rearranging, we have
\begin{equation}
\E [ G(\sS)] \geq G(\sS^\ast) - (1/e + \epsilon) \dfrac{h(\sS^\ast)}{1-h(\sS^\ast)}.
\end{equation}
\end{proof}
This, in turn, guarantees the production of a locally distributionally-robust solution that is produced at no additional function evaluation cost with respect to the solution of the naive formulation of \eqref{robust-weighted-avg}. Our novel formulation greatly relaxes the SSA and enables the use of much less computationally expensive methods for the solution. Namely, \textsc{Stochastic Greedy} has a function evaluation cost of $O(K\cdot\lvert \mathcal{R}\rvert),$ compared to SSA's $O(\lvert \sN\rvert^2 n \log(n \min_{i\in[n]} f_i (\sN)).$

Furthermore, this result extends gracefully to the case where the component objective functions $f_i$ for $i \in [n]$ are not submodular, but weak submodular instead. In this case, akin to the result with submodular functions, the newly-defined function $g$ in \eqref{eq:novel-clean} turns out to be weak submodular. The next lemma demonstrates this result.
\begin{lemma}
$G(\sS) = -\lambda\log(\sum_{i=1}^nQ_i\exp{(\frac{-f_i(\sS)}{\lambda})})$ is weak submodular when the component functions $f_i$ are weak submodular for all $i \in [n]$.
\end{lemma}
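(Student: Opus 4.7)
The plan is to leverage the decomposition $G(\sS) = g(h(\sS))$ from Theorem \ref{prop:main} and to check that both requirements of Definition \ref{def:weaksubmod} --- monotonicity and finite WSC --- transfer to $G$ under the weaker hypothesis that the $f_i$ are only weak submodular. Reviewing Theorem \ref{prop:main}, parts (ii) and (iv) only invoke monotonicity of each $f_i$, not their submodularity; since weak submodularity entails monotone nondecreasingness by Definition \ref{def:weaksubmod}, $h$ remains monotone nondecreasing and $g$ remains monotone increasing on $[0, h(\sN)]$. The composition $G = g \circ h$ is therefore monotone nondecreasing, and it only remains to show $w_G < \infty$.

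For the WSC, I would first bound $w_h$ where $h = \sum_{i=1}^n h_i$ with $h_i(\sS) \vcentcolon= Q_i(1 - \exp(-f_i(\sS)/\lambda))$. Applying the mean value theorem to the outer map $x \mapsto -\exp(-x/\lambda)$ on $[f_i(\mathcal{T}), f_i(\mathcal{T}\cup\{e\})]$ gives $\Delta_{h_i}(e\mid\mathcal{T}) = (Q_i/\lambda)\exp(-\xi_i^\mathcal{T}/\lambda)\Delta_{f_i}(e\mid\mathcal{T})$ for some $\xi_i^\mathcal{T} \in [0, f_i(\sN)]$, and analogously for $\sS$. Because $\exp(-\xi/\lambda)$ lies in the compact interval $[\exp(-f_i(\sN)/\lambda), 1]$, the per-index ratio $\Delta_{h_i}(e\mid\mathcal{T})/\Delta_{h_i}(e\mid\sS)$ is bounded by $w_{f_i}\exp(f_i(\sN)/\lambda)$ (the degenerate case $\Delta_{h_i}(e\mid\sS)=0$ forces $\Delta_{f_i}(e\mid\sS)=0$, hence $\Delta_{f_i}(e\mid\mathcal{T})=0$ by finiteness of $w_{f_i}$, so both numerator and denominator vanish and the index can be dropped). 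The elementary inequality $\sum_i a_i/\sum_i b_i \le \max_i a_i/b_i$ for positive sequences then yields $w_h \le \max_i w_{f_i}\exp(f_i(\sN)/\lambda) < \infty$.

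Finally, I would lift the bound from $h$ to $G$ by the same mean value device applied to $g$. Since $h(\sN) < \sum_i Q_i = 1$, the derivative $g'(x) = \lambda/(1-x)$ is strictly positive and uniformly bounded on $[0, h(\sN)]$, and MVT gives $\Delta_G(e\mid\mathcal{T}) = g'(\eta^\mathcal{T})\Delta_h(e\mid\mathcal{T})$ for some $\eta^\mathcal{T}$ in that interval (and likewise for $\sS$). The ratio $g'(\eta^\mathcal{T})/g'(\eta^\sS)$ is at most $1/(1-h(\sN))$, so $w_G \le w_h/(1-h(\sN)) < \infty$, completing the proof. The step I expect to be the main obstacle is the middle one, since weak submodularity is not preserved by arbitrary summation; the argument must exploit both the specific structure of $\phi(x) = 1-\exp(-x/\lambda)$ --- monotone, with derivative bounded above and bounded away from zero on the compact range of each $f_i$ --- and the $\max$-inequality for termwise-positive sums to consolidate the per-index bounds into a finite WSC for the aggregate $h$.
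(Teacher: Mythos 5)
Your proof is correct, but it reaches the conclusion by a genuinely different route than the paper. The paper argues directly from the definition of $w_G$: it writes the marginal gain $\Delta_G(e\mid\cdot)$ as $\lambda$ times the logarithm of a ratio of the weighted exponential sums, observes that the maximum defining $w_G$ (over the finitely many admissible triples) can be infinite only if some denominator $\Delta_G(e\mid\sS)$ vanishes, and rules this out by showing that $\sum_j Q_j\lp\exp(-f_j(\sS)/\lambda)-\exp(-f_j(\sS\cup\{e\})/\lambda)\rp=0$ would force some index $j_0$ with $Q_{j_0}>0$ to satisfy $f_{j_0}(\sS)=f_{j_0}(\sS\cup\{e\})$, hence to have infinite WSC, contradicting the hypothesis. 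That argument is purely qualitative: it certifies $w_G<\infty$ without producing a bound. Your argument instead routes through the decomposition $G=g\circ h$ of Theorem \ref{prop:main}, two applications of the mean value theorem, and the mediant inequality, and delivers the explicit estimate $w_G\le\max_{i}w_{f_i}\exp(f_i(\sN)/\lambda)\big/(1-h(\sN))$. This is strictly more informative, since the \textsc{Stochastic Greedy} guarantees for weak submodular functions cited after the lemma degrade with the numerical value of the WSC; the price is the extra bookkeeping around degenerate indices, which you handle correctly (the one case left implicit is when \emph{every} index is degenerate, so $\Delta_h(e\mid\sS)=\Delta_h(e\mid\sT)=0$ and the WSC ratio is $0/0$ --- a triple that contributes nothing under any sensible convention, and the paper's own proof quietly relies on the same convention when it treats a zero denominator as an infinite WSC).
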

\begin{proof}
As per Definition \ref{def:wsc}, we have that
\begin{equation}
\begin{split}
w_G &= \max_{(\sS, \sT, e) \in \Tilde{\sN}} \dfrac{\Delta_G(e\mid \sT)}{\Delta_G(e\mid \sS)}=\max_{(\sS, \sT, e) \in \Tilde{\sN}} \dfrac{\log \lp \dfrac{\sum_{j=1}^n Q_j \exp\lp\dfrac{-f_j(\sT)}{\lambda}\rp}{\sum_{j=1}^n Q_j \exp\lp\dfrac{-f_j(\sT\cup \{e\})}{\lambda}\rp}\rp}{\log \lp \dfrac{\sum_{j=1}^n Q_j \exp\lp\dfrac{-f_j(\sS)}{\lambda}\rp}{\sum_{j=1}^n Q_j \exp\lp\dfrac{-f_j(\sS\cup \{e\})}{\lambda}\rp}\rp}.
\end{split}
\end{equation}
Clearly, this value can only be unbounded if the denominator is zero, and this only occurs when
\begin{equation}
\dfrac{\sum_{j=1}^n Q_j \exp\lp\dfrac{-f_j(\sS)}{\lambda}\rp}{\sum_{j=1}^n Q_j \exp\lp\dfrac{-f_j(\sS\cup \{e\})}{\lambda}\rp} = 1,
\end{equation}
that is, when
\begin{equation}
\sum_{j=1}^n Q_j \exp\lp\dfrac{-f_j(\sS)}{\lambda}\rp = \sum_{j=1}^n Q_j \exp\lp\dfrac{-f_j(\sS\cup \{e\})}{\lambda}\rp.
\end{equation}
Rearranging the terms, this is equivalent to
\begin{equation}\label{eq:impossible}
\sum_{j=1}^n Q_j \underbrace{\lp \exp \lp \dfrac{-f_j(\sS)}{\lambda}\rp - \exp \lp \dfrac{-f_j(\sS \cup \{e\})}{\lambda} \rp \rp}_{A_j} = 0.
\end{equation}
Note that because each $f_j$ is monotone nondecreasing, the $A_j$ term in the sum is nonnegative for all $j$. Now, because $\sum_{j=1}^n Q_j = 1$, there exists at least one $j$ such that $Q_j >0$. Let $j_0$ be one such $j$. We know that $A_{j_0}$ is either positive or zero. If it is the case that $A_{j_0}$ is positive, then there is at least one positive term in the sum (alongside other nonnegative terms), and hence the sum cannot be equal to zero. If, on the other hand, $A_{j_0}$ is zero, then we must have that
\begin{equation}
f_{j_0}(\sS) = f_{j_0}(\sS \cup \{e\}).
\end{equation}
Yet, if this were the case, the WSC of $f_{j_0}$ would be
\begin{equation}
\begin{split}
w_{f_{j_0}} &= \max_{(\sS, \sT, e) \in \Tilde{\sN}} \dfrac{\Delta_{f_{j_0}}(e \mid \sT)}{\Delta_{f_{j_0}}(e\mid \sS)} = \dfrac{\Delta_{f_{j_0}}(e \mid \sT)}{f_{j_0}(S \cup \{e\}) - f_{j_0}(\sS)} = \dfrac{\Delta_{f_{j_0}}(e \mid \sT)}{0}= \infty,
\end{split}
\end{equation}
regardless of the choice of $\sT$ in the numerator. However, by the initial supposition that $f_j$ is weak-submodular for each $j \in [n]$, we know that $f_{j_0} < \infty$, and hence $A_{j_0}$ cannot be zero. Then, \eqref{eq:impossible} cannot hold, and thus $w_G < \infty$, establishing that $G$ is a weak-submodular function.
\end{proof}
This last result makes available to us theoretical guarantees related to \textsc{Stochastic Greedy} on weak submodular functions\cite{hibbard_hashemi_tanaka_topcu_2023, kaya2024randomizedgreedymethodsweak}.

With all of this settled, one point that remains to be examined is whether solving the regularized, ``soft-constrained'' problem \eqref{novel-final}, with which the entirety of this work has been interested, translates to solving the ``hard-constrained'' original problem \eqref{novel-robust}. We first note, as we previously have, the similarity of the formulation of \eqref{novel-final} to the Lagrangian of the inner minimization objective in \eqref{novel-robust}. Indeed, if we were to write the Lagrangian of the latter, we would have
\begin{equation}\label{eq:trueLagrange}
\Tilde{\mathcal{L}}(P, \sS, \lambda) = \sum^n_{i=1} P_i f_i(\sS) + \lambda\infdiv{P}{Q} -\lambda  R.
\end{equation}
Hence, the main difference between this Lagrangian and \eqref{novel-final} is the term $-\lambda R$. In conventional fashion, in finding the dual of this problem, we would first minimize the Lagrangian with respect to the primal variable $P$ in terms of the dual variable $\lambda$, and obtain
\begin{equation}
\Tilde{G}(\sS, \lambda) \vcentcolon= \sum^n_{i=1} P^\ast_i(\lambda) f_i(\sS) + \lambda\infdiv{P^\ast(\lambda)}{Q} -\lambda  R.
\end{equation}
Fortunately, the derivation starting in \eqref{eq:optimalP} remains valid, and so does the expression for $P^\ast$ given in \eqref{eq:optmalPi}, even with the addition of the $-\lambda R$ term. Then, continuing in the expected fashion, we would maximize $\Tilde{G}(\sS)$ with respect to $\lambda$ to obtain $\lambda^\ast$, which we would use in the expression for $P^\ast$, to find the optimal value for the primal variable, $P^\ast(\lambda^\ast)$, this time free of the variable $\lambda$. 

Let us for a moment adopt this approach in solving the inner minimization objective in \eqref{novel-final}. In maximizing
\begin{equation}
\begin{split}
\sum^n_{i=1} P^\ast_i(\lambda) f_i(\sS) + \lambda\infdiv{P^\ast(\lambda)}{Q}
 = -\lambda\log\lp\sum_{i=1}^nQ_i\exp{\lp\frac{-f_i(\sS)}{\lambda}\rp}\rp
\end{split}
\end{equation}
with respect to $\lambda$, we quickly realize that for $\lambda \geq 0$, the function is monotone increasing over its domain, and hence the maximizing $\lambda^\ast$ is unbounded. This, in turn, as the analysis in \eqref{eq:recovered-1} shows, forces $P^\ast = Q$.

This is precisely due to the missing $-\lambda R$ term. The inclusion of this term in the true Lagrangian $\Tilde{\mathcal{L}}$ in \eqref{eq:trueLagrange} is the key to having a finite optimal $\lambda^\ast$, and hence having $P^\ast \neq Q$. The inclusion of this term makes both $\lambda^\ast$ and $P^\ast$ dependent on $R$, the radius of the neighborhood of robustness:
\begin{equation}\label{eq:isconcave}
\lambda^\ast = \argmax_{\lambda\geq0} \;-\lambda\log\lp\sum_{i=1}^nQ_i\exp{\lp\frac{-f_i(\sS)}{\lambda}\rp}\rp -\lambda R.
\end{equation}
Because \eqref{eq:isconcave} is the maximization of a concave function, taking the derivative of the function with respect to $\lambda$ and setting it equal to $0$, we obtain the relation between $\lambda^\ast$ and $R$:
\begin{equation}\label{eq:therelation}
\begin{split}
&\dfrac{\sum_{i=1}^n Q_i \exp \lp\dfrac{-f_i(\sS)}{\lambda^\ast}\rp f_i(\sS)}{\lambda^\ast\sum_{i=1}^n Q_i \exp \lp\dfrac{-f_i(\sS)}{\lambda^\ast}\rp} + \log\lp \sum_{i=1}^n Q_i \exp \lp\dfrac{-f_i(\sS)}{\lambda^\ast} \rp\rp + R = 0.
\end{split}
\end{equation}
With an appeal to the implicit function theorem, we can (informally) say that within a neighborhood of any $(\lambda^\ast_0, R_0)$ pair that satisfies this relation, one can express $R$ as a one-to-one (hence invertible) function of $\lambda^\ast$, say, $R = r(\lambda^\ast)$.

The conclusion to these observations is the following: We claim that setting the regularization hyperparameter $\lambda$ to a value, say to $\lambda_0$, in \eqref{novel-final} is similar to setting the $R$ in \eqref{novel-robust} such that $R = r(\lambda_0)$. In other words, one can envision changing the value of the regularization parameter $\lambda$ in the ``soft-constrained'' problem as equivalent to changing the value of the radius of the neighborhood of robustness in the ``hard-constrained'' problem. This is intuitively supported by the observation that the $\lambda^\ast$ values obtained through the relation \eqref{eq:therelation} as $R\to 0^+$ approach $\infty$, and those obtained as $R\to\infty$ approach $0$.
\section{Application to Online Submodular Optimization}\label{sec:online}
A natural application of the proposed scheme arises within the context of \textit{online} submodular optimization, in the presence of time-varying objective functions. In this setting, we aim to produce a sequence of solutions to a sequence of problems
\begin{equation}
\begin{gathered}
\max_{\sS_t \in \sN} f_t(\sS_t) \\
\text{s.t.}\; \lvert \sS_t \rvert \leq K, \; \forall t \in \mathbb{N},
\end{gathered}
\end{equation}
where each $f_t$ is submodular, monotone nondecreasing, and normalized.

It is easy to approach this problem with the standard methods that we possess. We could treat the problem at each time step $t$ as a separate, standalone problem, and use any variant of the \textsc{Greedy} methods to produce a solution $\sS_t,$ entirely decoupled from the problems arising at different time steps.

However, in practical settings, it may not be desirable to reconstruct a new solution independent of the previous solutions at each time step, especially if the selection of additional distinct elements incurs extra cost. In this case, it would be desirable to construct a single solution that would perform satisfactorily over multiple time steps, if possible. One strategy to achieve this may be to set an observation window of $t_w$ time steps, so that over $t_w$ time steps we observe the objective functions \textit{played} by the system, and only after $t_w$ time steps will we play our solution suitable for the $t_w$ observations. Ideally, if there is some structure or notion of continuity within the variation of the objective functions $f_t,$ employing such a strategy might be worthwhile, as it would use up to $t_w$ times as few distinct elements while still achieving satisfactory results. In some sense, this is like an inverse of \textit{fairness} considerations, where one customarily wants to diversify their selections. Rather, here, the aim is to limit diversification and conserve the same selection of elements over many time steps.

With this idea in mind, we have to consider what strategy to use to leverage the $t_w$ observed objective functions $f_1, \ldots,f_{t_w}$ over a given observation window. One approach is to refer to a common idea in other optimization areas (e.g., momentum-based stochastic gradient descent methods in continuous optimization \cite{kingma2014adam,das2022faster} or the TD$(\lambda)$ method in reinforcement learning \cite{sutton2018reinforcement}). We initially propose the following scheme: For some fixed $\gamma \in [0, 1]$, we consider
\begin{equation}
\begin{split}
\max_{\sS \in \sN} \gamma^{t_w} f_1(\sS) + (1-\gamma) \sum_{t=1}^{t_w} \gamma^{(t_w-t)} f_t(\sS) 
\end{split}
\end{equation}
What this formulation aims to achieve is to geometrically weigh each of the $t_w$ observed objective functions, where $\gamma$ decides the relative importance of older observations with respect to the newer observations. A value of $\gamma = 1$ assigns all the weight to $f_1$, whereas $\gamma=0$ assigns, with the convention that $0^0 = 1,$ all the weight to the most recently observed objective function $f_{t_w}.$ The values $\gamma \in (0, 1)$ provide us with the whole spectrum of relative importances.

One important remark is that the geometric weighing of the objective functions over time effectively gives us a reference distribution $\Gamma$, which we can instead use as the center of a neighborhood of robustness, resulting in the formulation
\begin{equation}\label{eq:tr}
\begin{gathered}
\max_{\sS\subseteq \sN} \min_{P \in \Delta_{t_w}} \sum_{t=1}^{t_w} P_t f_t(\sS) + \lambda \infdiv{P_t}{\Gamma} \\
\text{s.t.}\;\lvert \sS \rvert \leq K, 
\end{gathered}
\end{equation}
where $\Gamma = ((1-\gamma)\gamma^{t_w-1},  \ldots, (1-\gamma)\gamma, (1-\gamma))$. This formulation aims to then carry the concept of robustness into the online setting, where we attempt distributional robustness over time steps of the problem. 
\section{Experimental Results}\label{sec:results}
\begin{figure*}%
    \centering
    \subfloat[\centering Performance on reference distribution]{{\includegraphics[width=.49\linewidth]{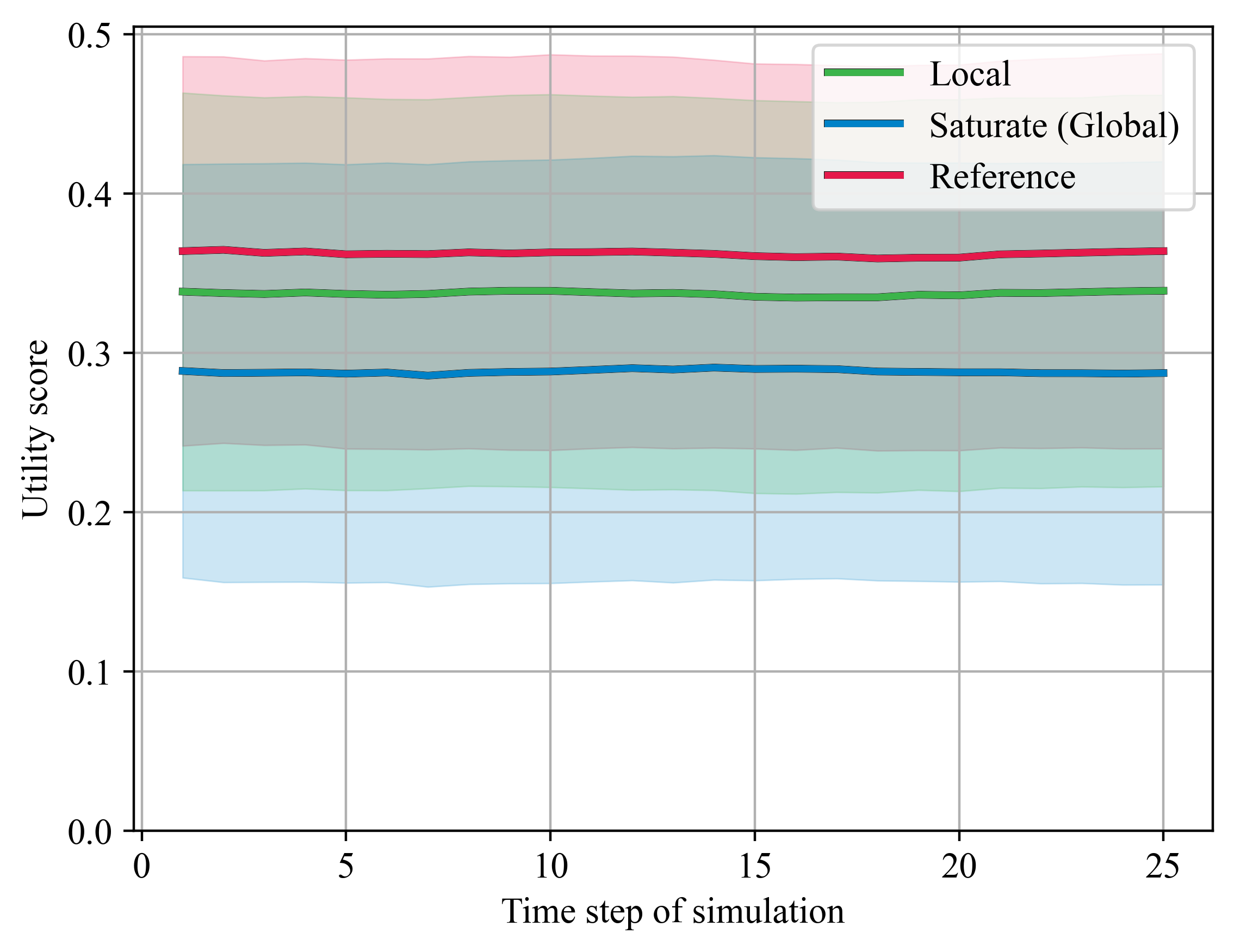} }}%
    \subfloat[\centering Performance on worst-case task]{{\includegraphics[width=.49\linewidth]{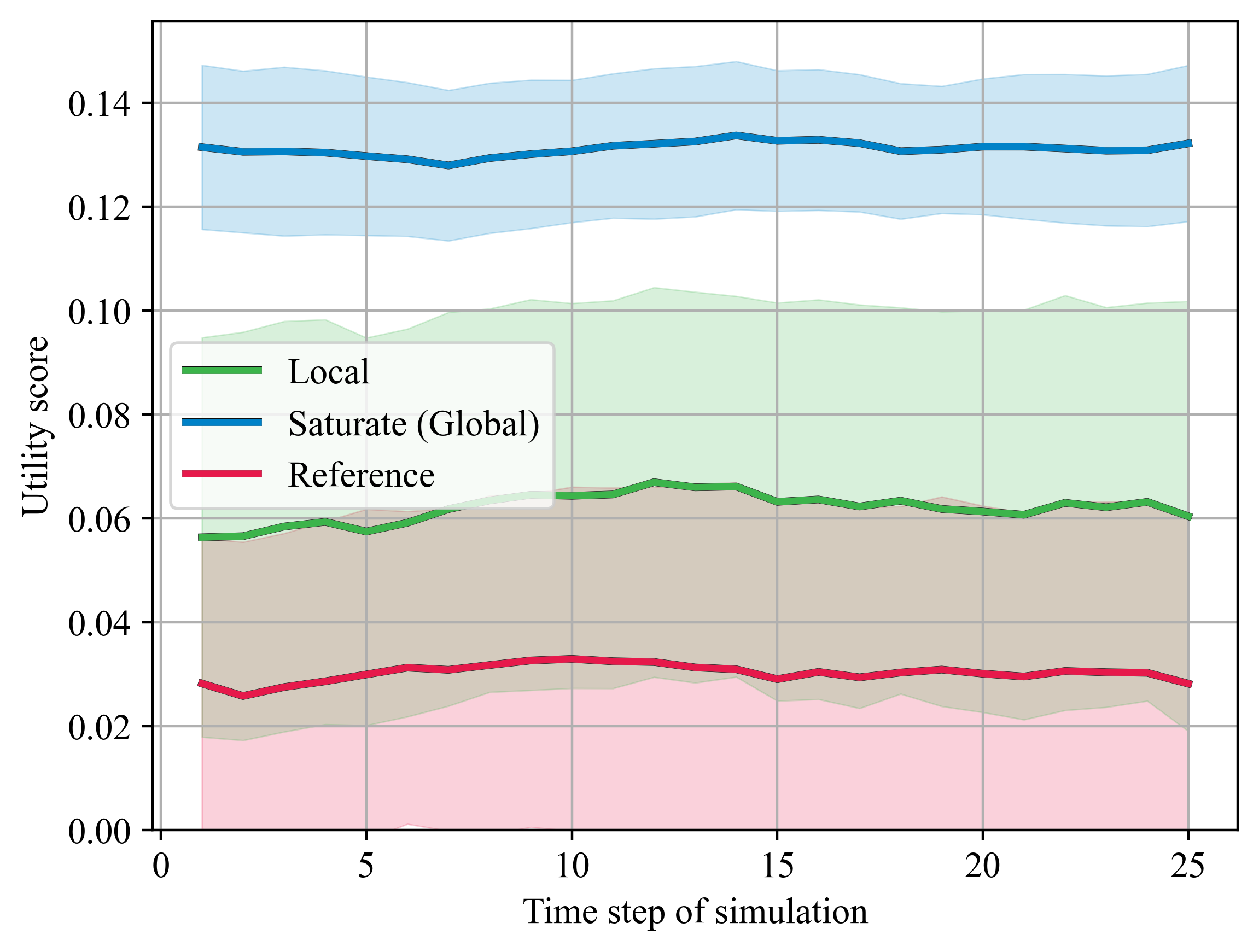} }}%
    \quad
    \subfloat[\centering Performance on local worst-case distribution]{{\includegraphics[width=.49\linewidth]{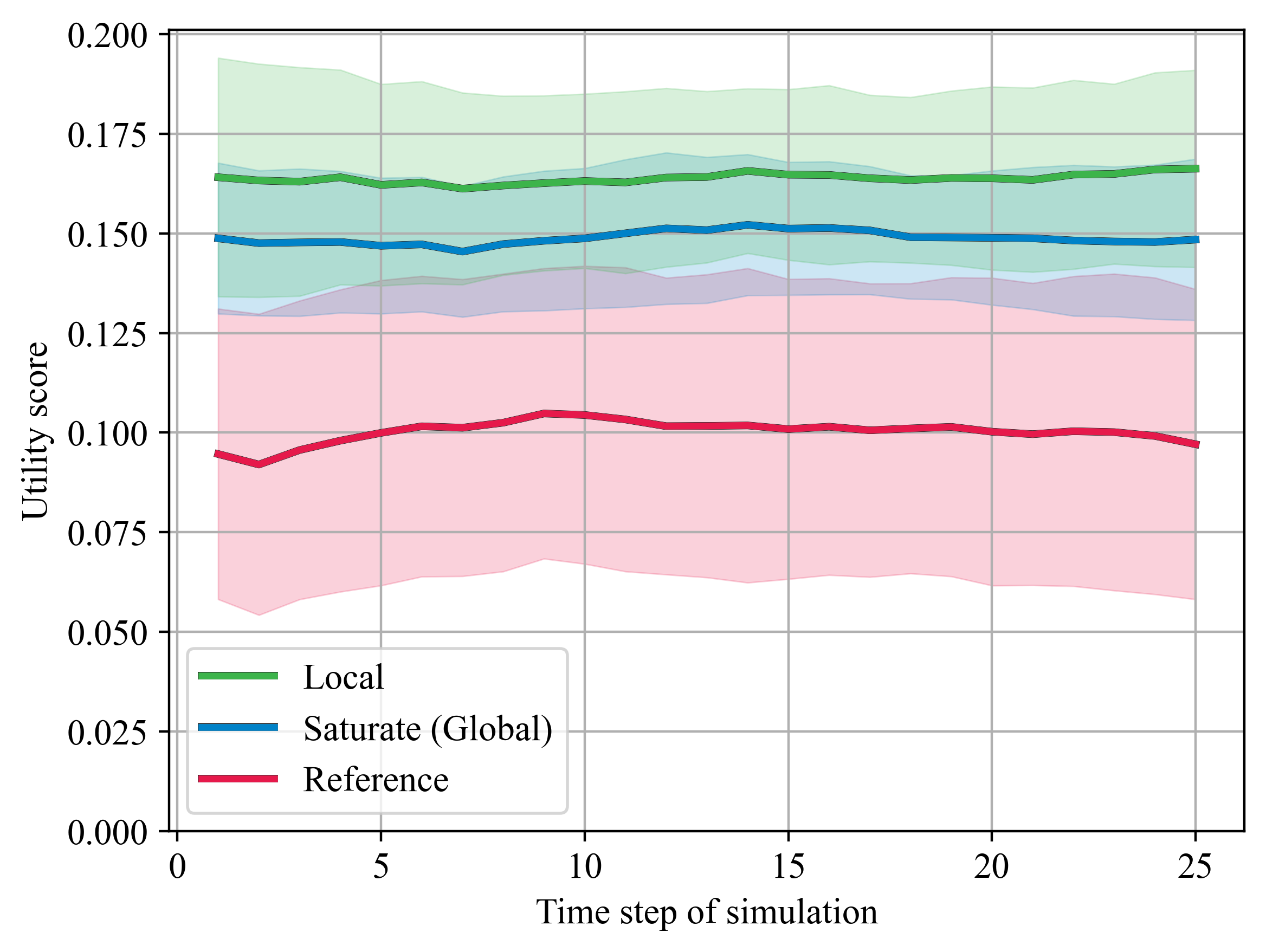} }}%
    \subfloat[\centering Wall-clock time taken]{{\includegraphics[width=.49\linewidth]{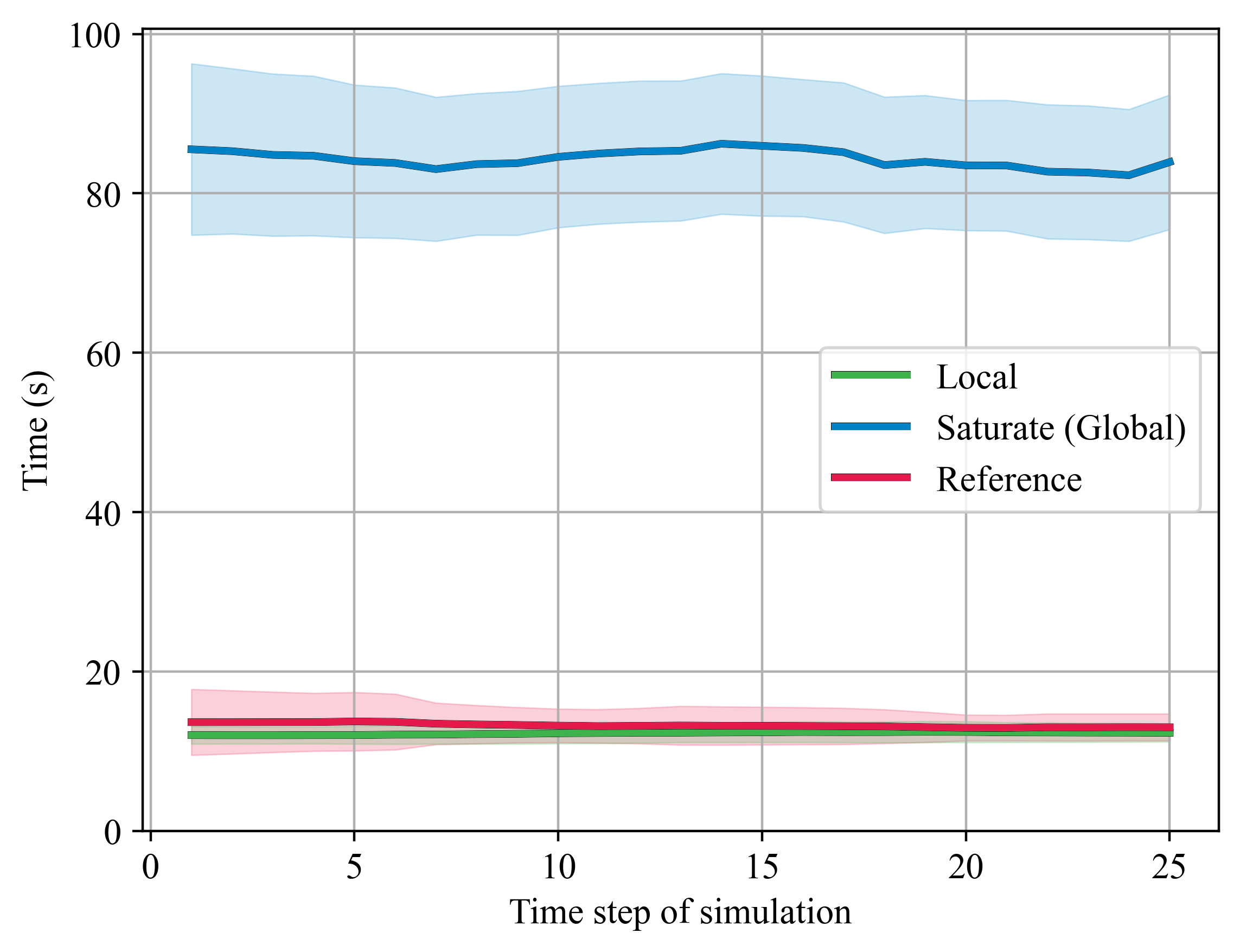} }}%
    \caption{The average performances over $15$ runs of the three algorithms focused on optimizing the reference distribution, the global worst-case task, and the local worst-case distribution close to the reference distribution, evaluated on the four criteria of reference distribution performance, worst-case task performance, local worst-case distribution performance and the wall-clock time taken by the algorithm in the construction of the solution in the satellite selection task of Section \ref{subsec:satsel}. \textbf{Local} represents the relative-entropy-regularized \textsc{Stochastic Greedy} algorithm solving our novel formulation, aiming for \textit{local} worst-case distributional robustness in the neighborhood of the reference distribution. \textbf{Saturate (Global)} represents SSA proposed in \cite{RSOS}, aiming for global worst-case task robustness. \textbf{Reference} represents the \textsc{Stochastic Greedy} algorithm being used to directly optimize the utility of the reference distribution. The highlighted areas indicate one standard deviation. The results have been put through a moving average filter with window size $6$.}%
    \label{fig:main}%
\end{figure*}
In this section, we use an application involving a constellation of low Earth orbit (LEO) satellites to test the performance of our three novelly proposed algorithms, namely, the relative-entropy-regularized \textsc{Stochastic Greedy} algorithm detailed in Section \ref{sec:theoretical}, its application to the online submodular optimization setting detailed in Section \ref{sec:online}, and the \textsc{Saturate with Preference} algorithm detailed in Section \ref{sec:distances}. For the results, we simulate a Walker-Delta constellation parameterized by $i:T/P/f$, where $i$ is the orbit inclination, $T$ is the number of satellites in the constellation, $\Pi$ is the total number of orbital planes of the constellation, and $\phi$ is the phase difference in between the orbital planes in pattern units\cite{hibbard_hashemi_tanaka_topcu_2023, constellations}. The semi-major axis length of all of the orbits is $8378.1$ kilometers, and the satellites remain Earth-pointing with a field-of-view angle of $\pi/3$ radians. For the testing of all algorithms, we instantiate a constellation with parameters $75^\circ:240/12/1$.


In the testing of the relative-entropy-regularized \textsc{Stochastic Greedy} and the \textsc{Saturate with Preference} algorithms, we instantiate six distinct, time-varying tasks that yield six objective functions $f_{t,1}, \ldots, f_{t,6}$. The objective functions $f_{t,1}, \ldots, f_{t,5}$ are related to the performance of the algorithms on atmospheric sensing tasks. Each of these tasks involves taking atmospheric readings at a set of five randomly located points on Earth. The atmospheric conditions at these points are described by the Lorenz 63 model\cite{DeterministicNonperiodicFlow}. The dynamics at these points are parameterized by values that make the system chaotic. An illustration of one instance of these points is provided in Figure \ref{fig:selection-points}, where points labeled $1$ belong to atmospheric reading task $1$ whose performance is indicated by the value of $f_{t,1}$, and so on. In particular, the utility values $f_{t,1}(\sS), \ldots, f_{t,5}(\sS)$ of these tasks for a subset of selected satellites $\sS$ are proportional to the additive inverse of the mean-squared error (MSE) achieved by the selection $\sS$ of satellites. The MSE is calculated by estimating the state of each atmospheric sensing point using an unscented Kalman filter \cite{BarShalom2001EstimationWA}, and comparing it with the actual values of the atmospheric state at the points of interest. This estimation is highly dependent on whether an atmospheric point is within the field of view of a satellite. Note also that the additive inverse of the MSE is known to be a weak-submodular function\cite{leveraging, nearoptimal}. Specifically, for $i \in [5]$, $f_{t,i}$ has form
\begin{equation}
f_{t,i}(\sS_t) = \frac{1}{Z_{t, i}}\operatorname{Tr}\lp\mathbf{P}_{t|t-1} - \mathbf{F}_{\sS_t}^{-1}\rp,
\end{equation}
where $\mathbf{P}_{t|t-1}$ indicates the prediction error covariance matrix at time $t$, $\mathbf{F}_{\sS_t}^{-1}$ indicates the inverse of the Fisher information matrix for the solution $\sS_t$ constructed at time $t$\cite{nearoptimal}, and $Z_{t,i}$ is the normalizing coefficient of objective $i$ at time $t$. 
\begin{figure}
\includegraphics[width=1\linewidth]{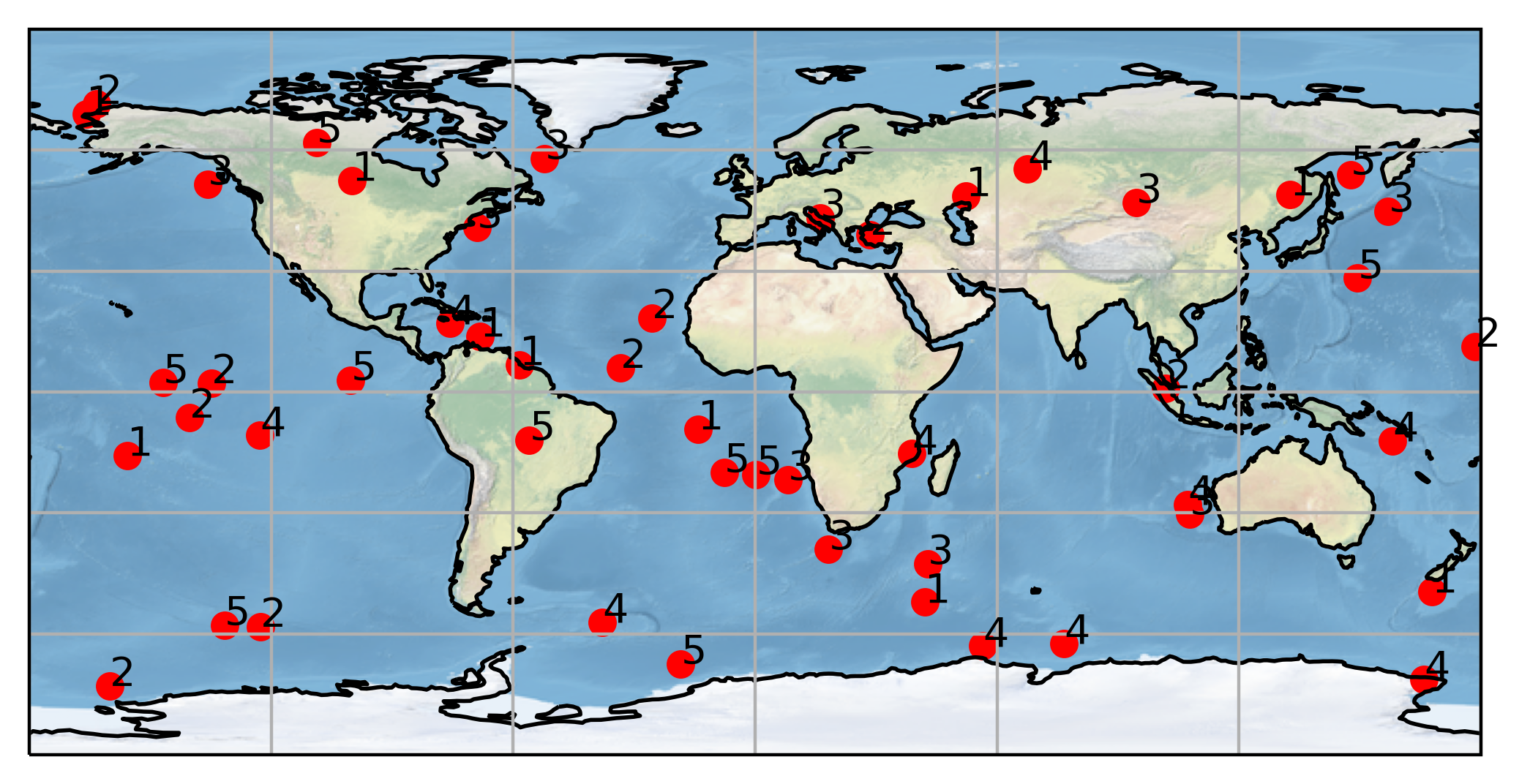}
  \caption{A selection of atmospheric points of interest for the five atmospheric reading tasks, $f_{t,1}, \ldots, f_{t,5}$ in one run of the simulation. Each task instantiates $5$ points of interest, for a total of $25$ points. The labels near the points indicate which atmospheric task a point belongs to.}
  \label{fig:selection-points}
\end{figure}
The sixth and final task, whose performance is given by the objective function $f_{t,6}$, involves ground coverage of the Earth. The utility value $f_{t,6}(\sS)$ of this task is proportional to the Earth coverage achieved by the selected subset $\sS$ of the satellites in the constellation. The area of coverage is determined based on a grid of the Earth's surface consisting of cells of width and height $2^\circ$. The area of each cell is calculated assuming a spherical Earth. If the center point of a cell on the grid is within the field-of-view of any satellite in the selection, that grid is taken to be covered by the selection. Specifically, $f_{t,6}$ has form
\begin{equation}
f_{t,6}(\sS_t) = \dfrac{1}{Z_{t,6}} \dfrac{\sum_{c=1}^C \mathbb{1}(\text{cell } c \text{ is in view of a satellite in } \sS_t)}{C},
\end{equation}
where $C$ is the total number of grid-cells. 

The utility values of all six tasks $f_{t,1},\ldots f_{t,6}$ are normalized to the range of $[0, 1]$, by dividing the utility $f_{ti}(\sS)$ of a selection $\sS$ at any time step by $t$ $f_{t,i}(\sN)$, the maximum utility achievable by selecting the entire ground set at that time step. This ensures that the importance score of an individual task is not influenced by the potentially arbitrary value of its utility, and is solely determined by the weight $Q_i$ assigned to it.

We randomly sample a reference distribution $Q$ uniformly from $\Delta_n$ to assign an importance score to each task. Following, we simulate the atmospheric states of the points of interest and the trajectory of the Walker-Delta constellation, for $25$ time steps, each corresponding to a time interval of $60$ seconds, for $15$ runs. 

\subsection{Relative-Entropy-Regularized Stochastic Greedy for Satellite Selection}\label{subsec:satsel}
For the assessment of the relative-entropy-regularized \textsc{Stochastic Greedy} algorithm, we compare the performance of the solutions produced by three algorithms, in terms of four distinct performance criteria. For each of the algorithms, we use a cardinality bound $K=10$, a sampling set size $\lvert\mathcal{R}\rvert=24$, and a regularization parameter $\lambda=0.1$, where applicable. The three algorithms compared are as follows:
\begin{itemize}
    \item \textbf{Algorithm 1 - Local:} The relative-entropy-regularized \textsc{Stochastic Greedy} algorithm, that is, the \textsc{Stochastic Greedy} algorithm applied to the maximization of our novel objective function $G(\sS) = -\lambda\log(\sum_{i=1}^nQ_i\exp{(-f_i(\sS)/\lambda)})$ in  \eqref{eq:novel-clean}.
    \item \textbf{Algorithm 2 - Saturate (Global):} The \textsc{Submodular Saturation Algorithm}, which aims to optimize the worst-case scenario of \eqref{robust-first}.
    \item \textbf{Algorithm 3 - Reference:} The \textsc{Stochastic Greedy} algorithm, applied to the maximization of the objective function $f(\sS) =\sum_{i=1}^nQ_if_i(\sS)$ of \eqref{robust-weighted-avg}, focused on directly optimizing with respect to the reference distribution with no consideration of local robustness.
\end{itemize}
The four performance criteria are as follows:
\begin{itemize}
    \item \textbf{Criterion 1:} The utility scores with respect to the objective $F_1(\sS) = \sum_{i=1}^n Q_i f_i(\sS)$, which represents the performance of the algorithms when weighed directly by the reference distribution $Q$.
    \item \textbf{Criterion 2:} The utility scores with respect to the objective $F_2(\sS) = \min_{i\in[n]} f_i(\sS)$, which represents the global worst-case single-task performance.
    \item \textbf{Criterion 3:} The utility scores with respect to the objective $F_3(\sS) = \sum_{i=1}^n P^*_i f_i(\sS)$, where $P^* = \argmin_{P\in\Delta_n} \sum^n_{i=1} P_i f_i(\sS) + \lambda \infdiv{P}{Q}$, hence representing the local worst-case scenario performance localized to a relative-entropy neighborhood of the reference distribution. This criterion may be considered as the benchmark of our novel formulation.
    \item \textbf{Criterion 4:} The wall-clock runtime of the algorithm.
\end{itemize}
The average performances of the three algorithms over $15$ runs of the simulation as evaluated by these four criteria are presented in Figure \ref{fig:main}.

Several observations follow from these results. Firstly, the designation of SSA as ``too pessimistic'' is indeed justified by the results, as seen from Figure \ref{fig:main} (a), where it fails to perform on the reference distribution as it is too focused on worst-case performance. However, it does indeed dominate in worst-case single-task performance over the other two algorithms, as evidenced by Figure \ref{fig:main} (b).

\begin{figure}[t]
    \centering
    \subfloat[\centering Selection made by Algorithm 1 - Local]{{\includegraphics[width=.49\linewidth]{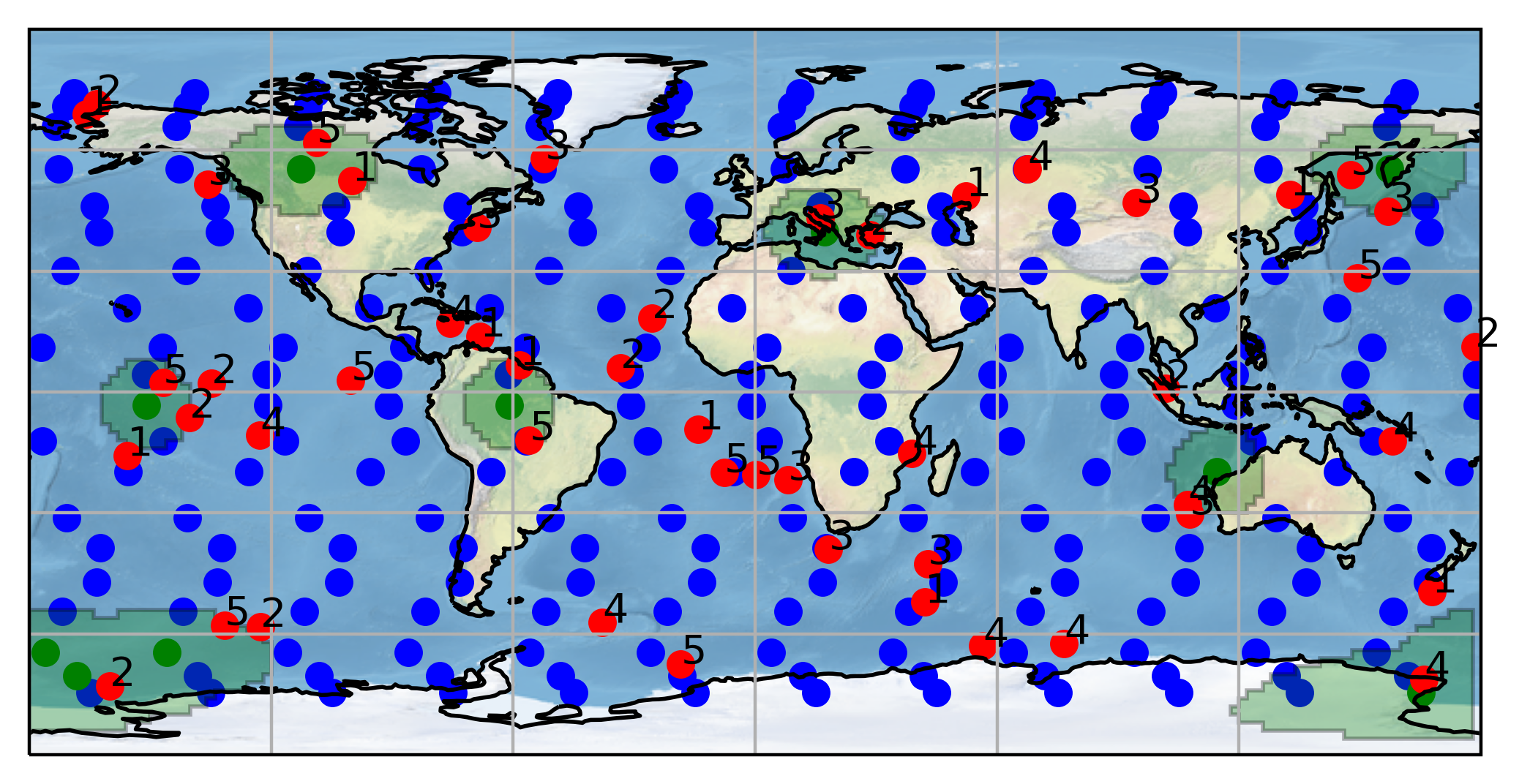} }}%
    \subfloat[\centering Selection made by Algorithm 2 - Saturate (Global)]{{\includegraphics[width=.49\linewidth]{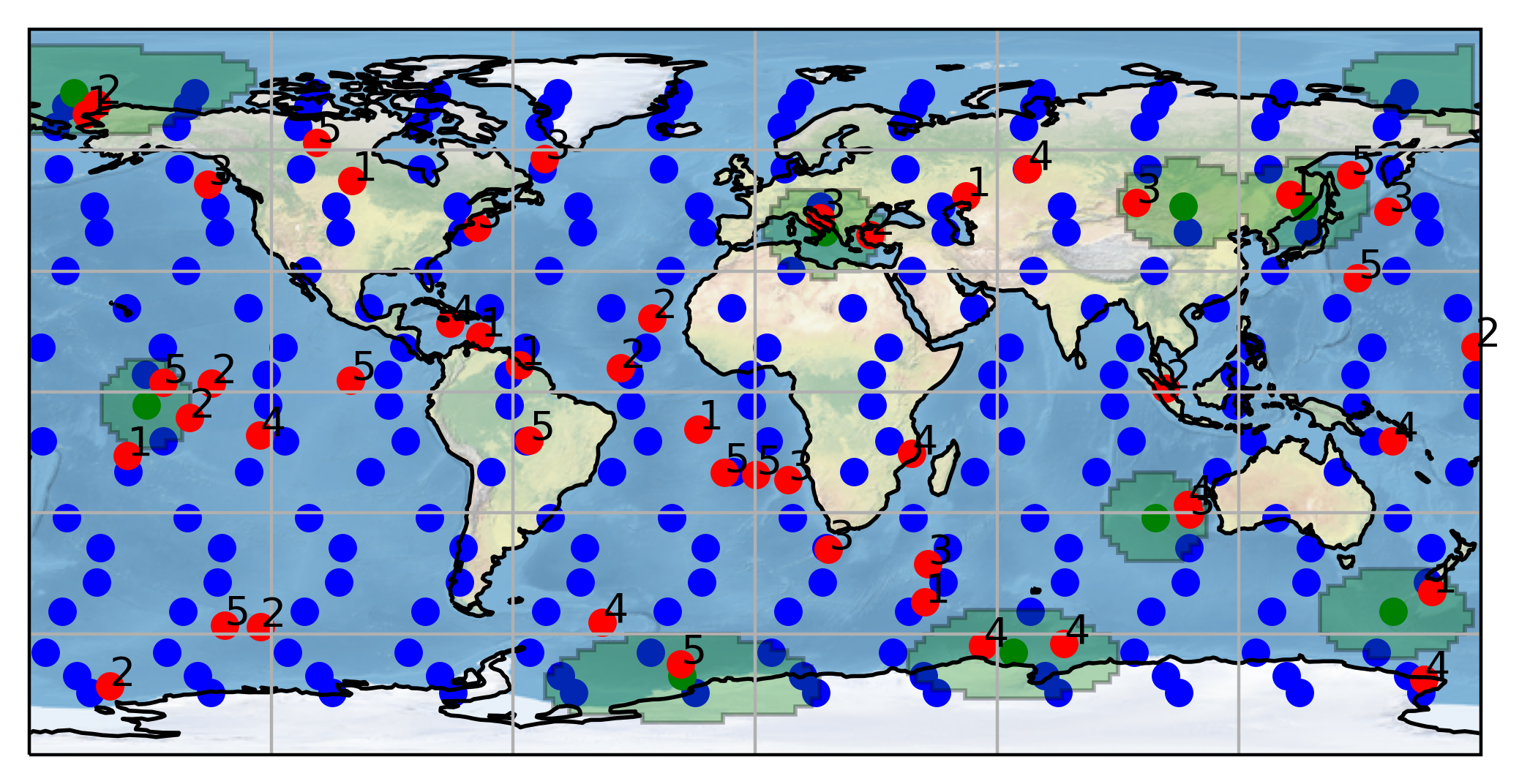}}}%
    \quad
    \subfloat[\centering Selection made by Algorithm 3 - Reference]{{\includegraphics[width=.49\linewidth]{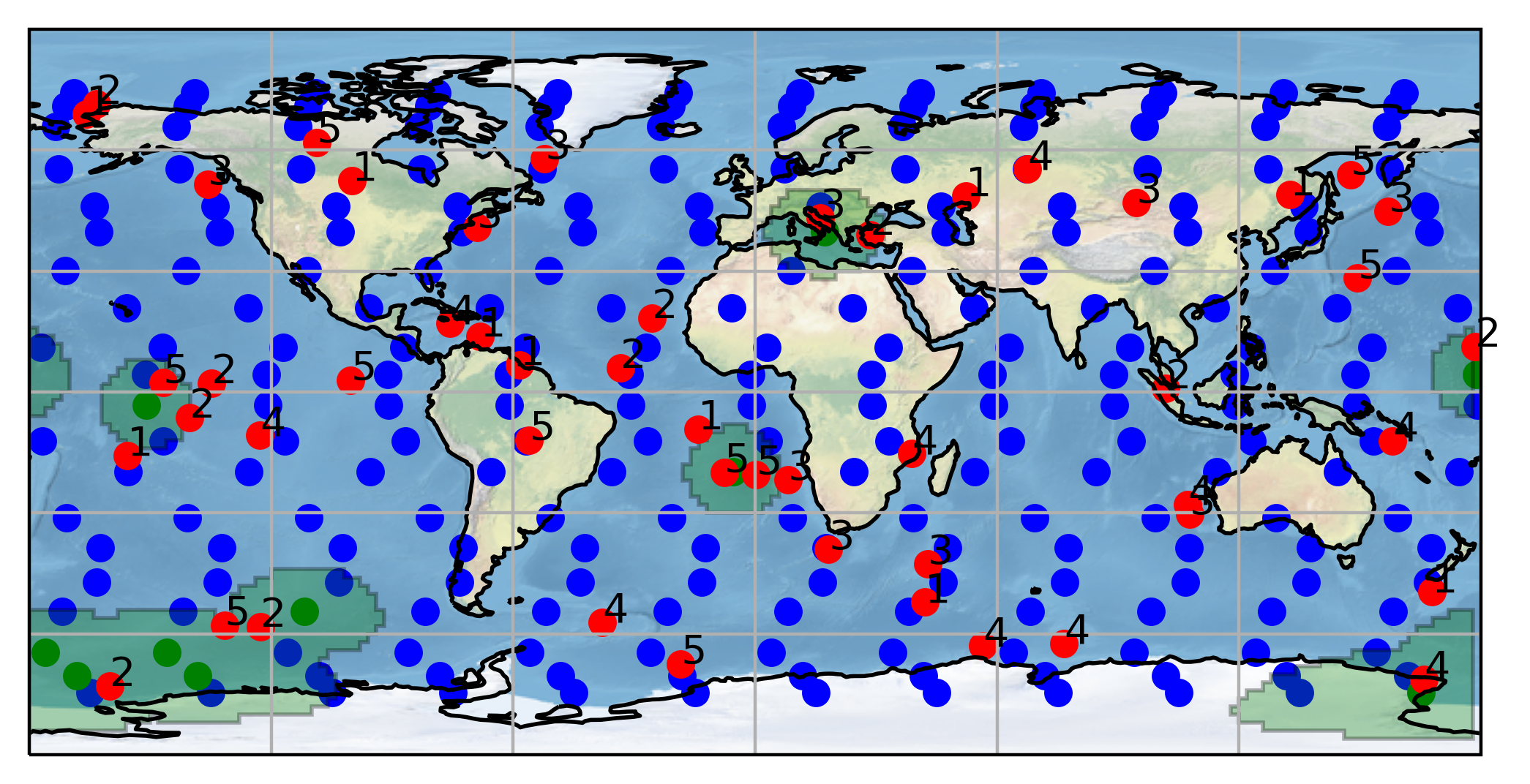}}}%
    \caption{The satellites selected by Algorithms 1, 2, and 3 on the $25$th time iteration of one run of the simulation. The red points with their corresponding numbers indicate the atmospheric points of interest and the tasks they belong to. The blue points indicate the satellites in the constellation. The green points indicate the selected satellites at the current time iteration. The highlighted green areas indicate the ground coverage provided by the selected satellites. The reference distribution for this run of the simulation is $Q = [0.022, 0.267, 0.088, 0.087, 0.183, 0.353].$}%
    \label{fig:frames}%
\end{figure}

Concerning the performance of Algorithm 1 - Local solving our formulation, we see from Figure \ref{fig:main} (a) that it has comparable performance on the reference distribution with Algorithm 3 - Reference, which directly optimizes the reference distribution. The usefulness of the Local approach is reflected clearly in Figure \ref{fig:main} (c), where we observe it fulfilling the local distributional robustness that our formulation aimed at achieving. Algorithm 1 - Local greatly outperforms Algorithm 3 - Reference and marginally outperforms SSA on the worst-case scenario within a neighborhood around the reference distribution. However, this marginal surpassing is complemented by the much smaller runtime and computational complexity of the Local approach.

With regard to the wall-clock time taken by the algorithms, as expected, Algorithms 1 and 3 perform identically, as they are instances of the same \textsc{Stochastic Greedy} algorithm. SSA performs much more poorly, consistently taking many times as much time at each time step. This is explained by the fact that due to its line search-based approach, SSA uses several runs of the \textsc{Stochastic Greedy} procedure in each of its iterations.

We may conclude from these results that Algorithm 1 - Local, solving our proposed novel formulation, succeeds in constructing a solution that is comparable to Algorithm 3 - Reference in optimizing the performance of the reference distribution, while also achieving local distributional robustness within the neighborhood of the reference distribution. Although SSA also manages to produce a solution that is more robust against worst-case tasks and achieves decent local distributional robustness, it is much more computationally expensive and has a much longer runtime.

\subsection{Saturate with Preference for Satellite Selection}\label{subsec:swp}
We now turn to the assessment of Algorithm \ref{alg:ssa-w-pref}, \textsc{Saturate with Preference}. The general setting of the experiments remains similar to the previous subsection, with the same constellation parameters of $75^\circ:240/12/1$, and the same objective functions $f_{t,1}, \ldots, f_{t,6}$. We compare the performance of the selection made by \textsc{Saturate with Preference}, with that of the unmodified SSA, by looking at the values achieved by the two objective functions that are assigned the highest weight by the random sampling of $Q$. In essence, this allows us to evaluate whether adding the element of preference to SSA works as intended. Indeed, Figure \ref{fig:swp} demonstrates that for the two objective functions with the highest priority, \textsc{Saturate with Preference} leads to the selection of a subset that consistently achieves a higher utility score in comparison to SSA. Indeed, Figure \ref{fig:swp} demonstrates that \textsc{Saturate with Preference} consistently outperforms SSA in terms of the performance on the objective functions with the highest assigned priority.
\begin{figure}[t]
    \centering
    \includegraphics[width=.5\linewidth]{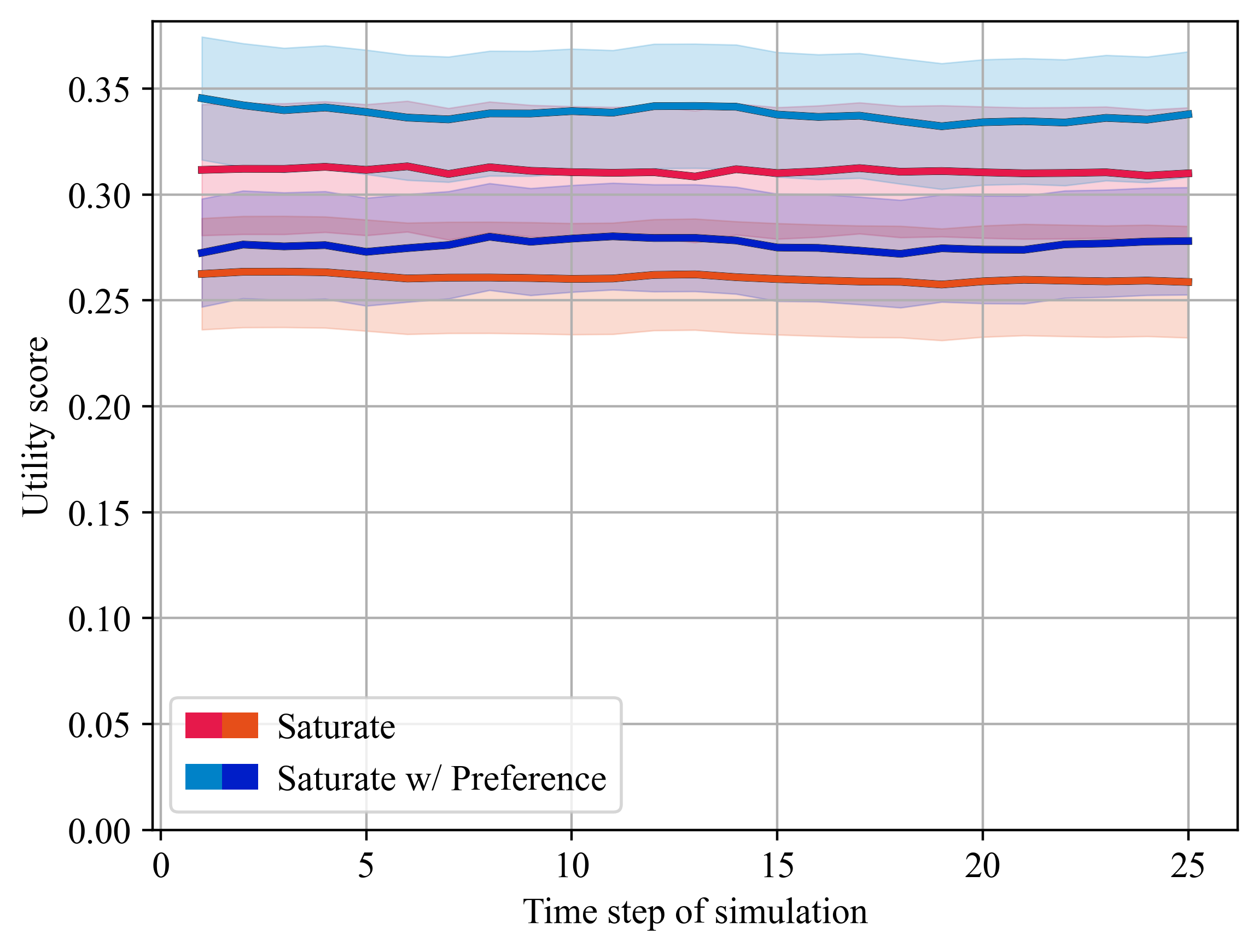}%
    \caption{The average performances over fifteen runs of \textsc{Saturate with Preference} in comparison to SSA on the two objective functions with the highest assigned weight. The highlighted areas indicate one tenth standard deviation. The results have been put through a moving average filter with window size $6$.}%
    \label{fig:swp}%
\end{figure}

\subsection{Application to Online Submodular Optimization}\label{subsec:online}
In this subsection, we assess the performance of \eqref{eq:tr}, which we will call the \textit{time-robust (TR)} formulation for the sake of brevity, as detailed in Section \ref{sec:online}. In summary, the TR formulation aims, using a combination of a momentum-like weighing scheme of the time-varying objective functions in an online setting, along with the idea of relative-entropy regularization, to reuse the same selections made in one step over multiple time steps. In this way, it aims to be more cost-efficient in settings where the selection of more diverse elements incurs additional costs. 

As a baseline, we choose the standard approach of treating each individual objective function $f_t$ observed at time step $t$ as a separate problem and solve it using the \textsc{Stochastic Greedy} algorithm, without any consideration for the conservation of solutions over time steps. Figure \ref{fig:tr} demonstrates the comparison of our proposed approach with the time-window size $t_w =5$ against the standard approach, in terms of the utility achieved, the wall-clock time taken, and the total number of distinct elements used, i.e., $\lvert \bigcup_{t=0}^T \sS_t\rvert$, where $\sS_t$ represents the solution constructed at time step $t$ of the simulation. The results indicate that TR achieves a comparable utility in comparison to the standard approach. The wall-clock time taken fluctuates in the TR formulation, since with a time-window size of $5$, the algorithm only observes during four of every five time steps and plays its solution, performing all function evaluations on every fifth time step, although the total wall-clock time taken is in the same range as the standard approach. However, the total number of distinct elements chosen by the TR formulation is much lower, using less than half the number of distinct elements in comparison to the standard approach.
\begin{figure}
    \centering
    \subfloat[\centering Average utility of the two algorithms]{{\includegraphics[width=.49\linewidth]{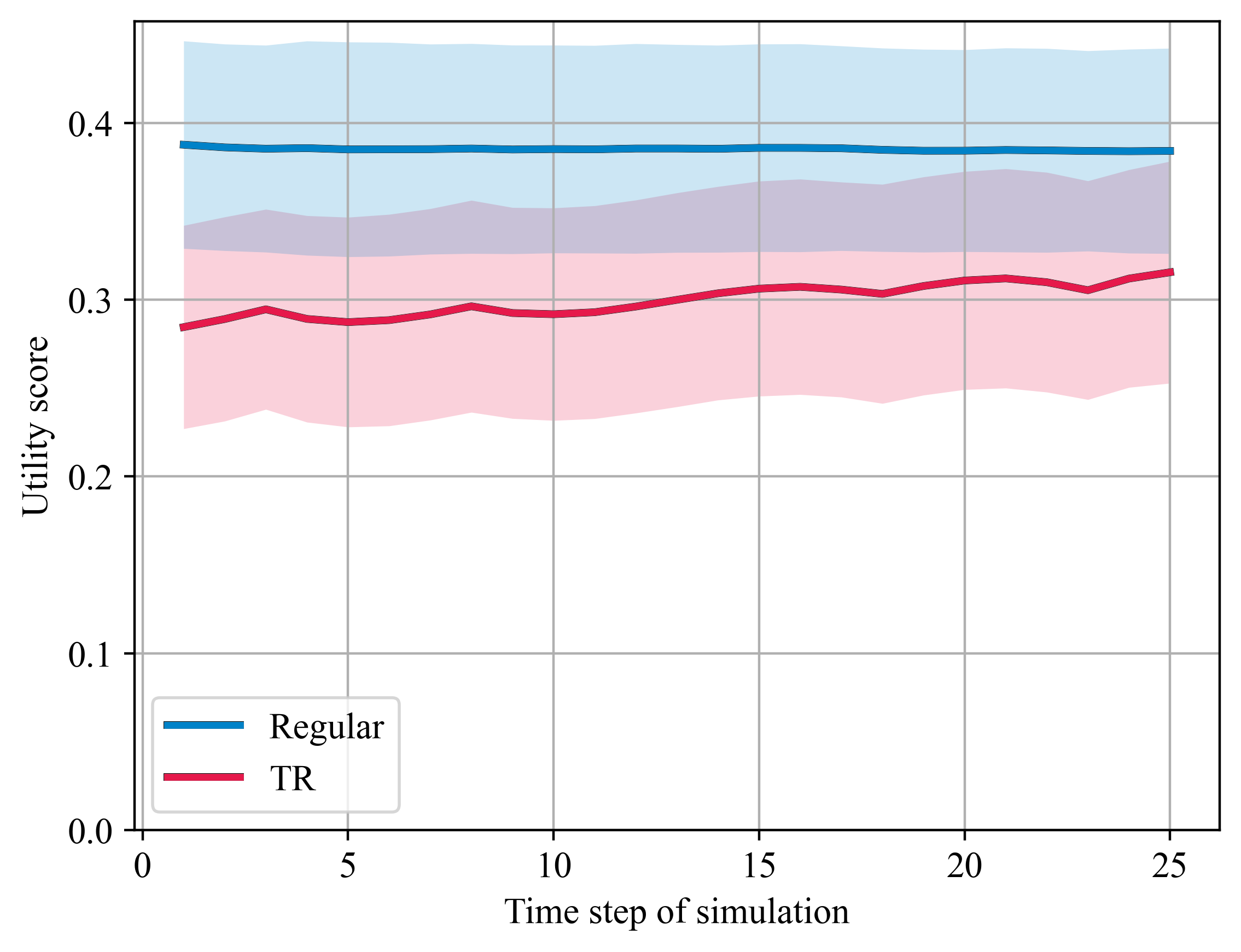} }}%
    \subfloat[\centering Average wall-clock time taken by the two algorithms]{{\includegraphics[width=.49\linewidth]{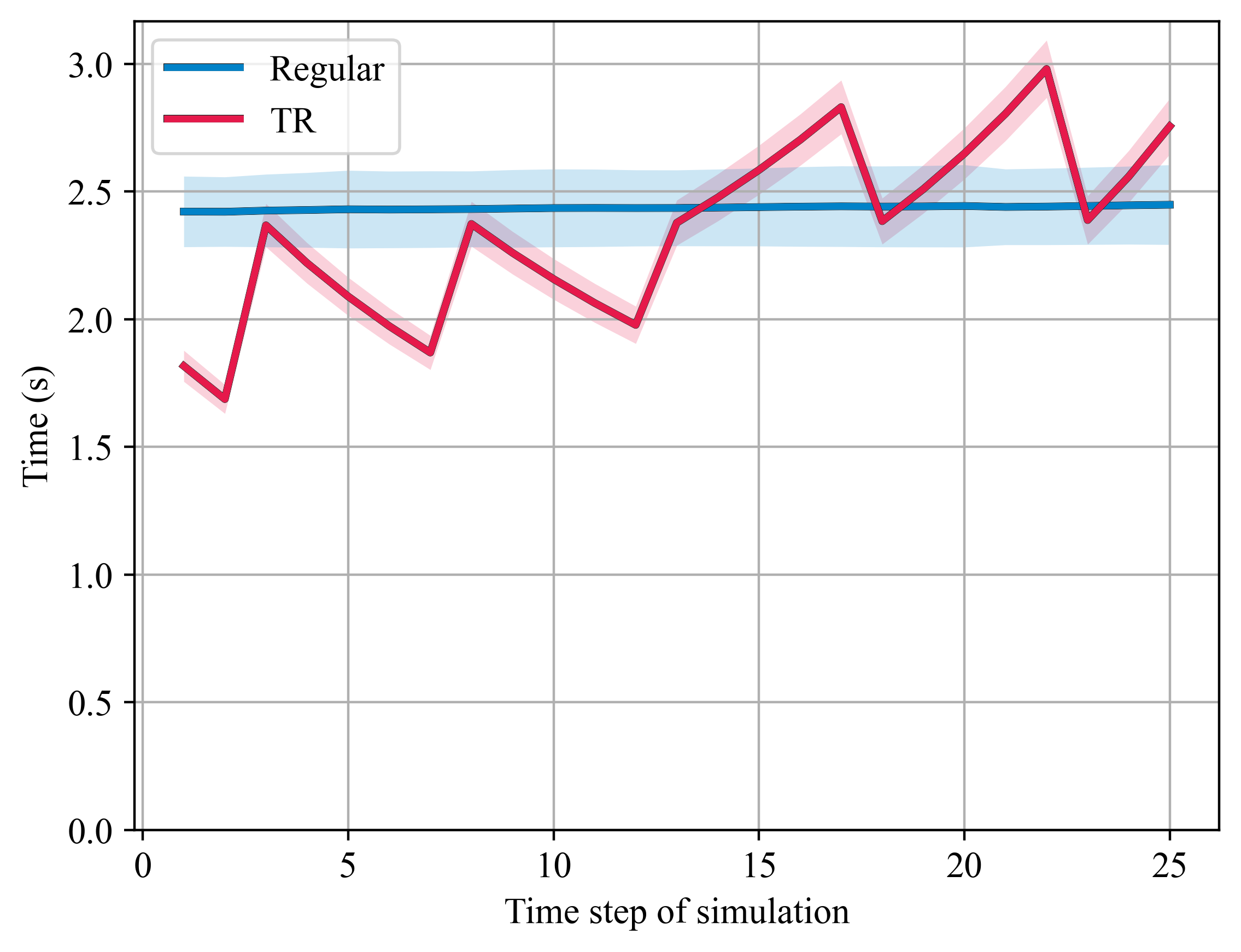} }}%
    \quad
    \subfloat[\centering Average number of distinct elements chosen by the two algorithms]{{\includegraphics[width=.49\linewidth]{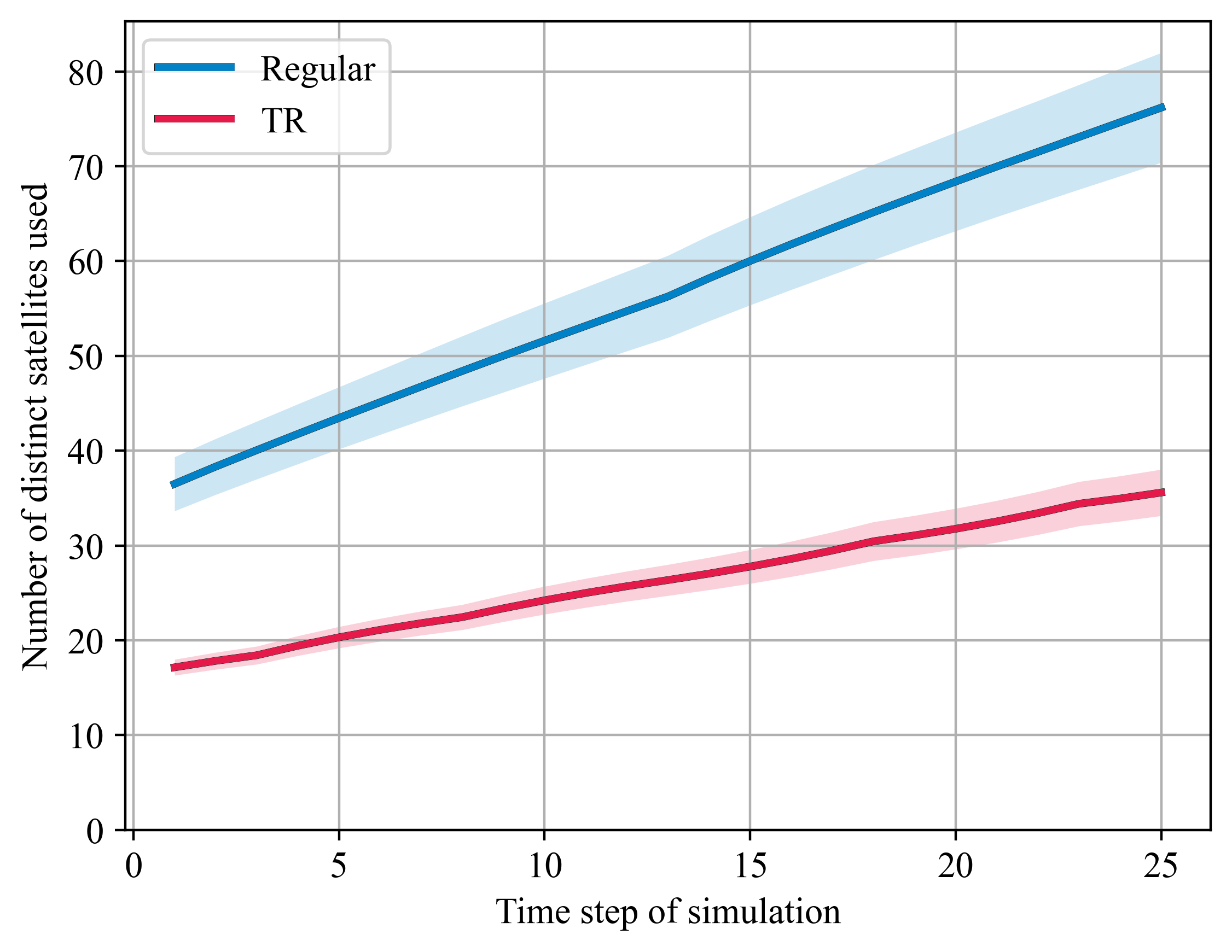} }}%
    \caption{The average performances over $15$ runs of the two algorithms in the online task detailed in Section \ref{subsec:online}. The comparison criteria are the utility achieved, the wall-clock time taken, and the total number of distinct elements used. \textbf{Regular} represents the standard approach of solving each objective function $f_t$ observed at time step $t$ independently of the previous objective functions observed, using the \textsc{Stochastic Greedy} algorithm. \textbf{TR} represents the time-robust formulation detailed in Section \ref{sec:online}, with a time-window size $t_w = 5$. The highlighted areas indicate one standard deviation. The results have been put through a moving average filter with size $6$.}%
    \label{fig:tr}%
\end{figure}
\subsection{Practical Application: Image Summarization}\label{subsec:imgsum}

As a final demonstration using a more up-to-date application involving machine learning and signal processing, we apply the proposed method to image summarization. We tackle the case detailed in \cite{malherbe2022robustness}, which, in short, involves selecting the most representative $K \leq \lvert \sN\rvert$ images out of a dataset $\sN = [\lvert \sN \rvert]$ of indexed images. More formally, for a selection $\sS \subseteq \sN$ of images, one can evaluate the utility $f_i(\sS)$ of each image $i \in \sN$ through its similarity to the closest image in the selection by:
\begin{equation}
f_i(\sS) = 1 - \min_{e\in \sS} d(i, e),
\end{equation}
where $d(i, e)$ designates some distance between images $i$ and $e$. Replicating the setting in the aforementioned work \cite{malherbe2022robustness}, we use the $\lvert \sN \rvert = 819$-image Pokemon dataset\cite{Pokemon}, using image embeddings calculated with an AlexNet\cite{NEURIPS2018_7e448ed9} trained on the ImageNet dataset\cite{5206848}. We use the normalized cosine distance
\begin{equation}
d(i, e) = \frac{v_i \cdot v_e}{\lVert v_i\rVert \lVert v_e\rVert} + \min_{(e, e^{'}) \in \sN^2}\frac{v_e \cdot v_{e^{'}}}{\lVert v_e\rVert \lVert v_{e^{'}}\rVert}.
\end{equation}
The average results obtained from $15$ runs of the algorithms, where in each the reference distribution $Q$ is the uniform distribution, are demonstrated in Figure \ref{fig:imgresults}. We observe that our proposed Local algorithm outperforms SSA for nearly all values of the cardinality bound $K$ on the reference distribution and on the local worst-case distribution, while also taking significantly less computation time, as evidenced by the results of the previous experiments. Regarding the performance on the worst-case task, SSA only outperforms our algorithm within the cardinality constraint range of $K=1$ to $K=5$, but then both algorithms show virtually the same performance, again, with the Local algorithm being much less computationally expensive.
\begin{figure}%
    \centering
    \subfloat[\centering Performance on reference distribution]{{\includegraphics[width=.49\linewidth]{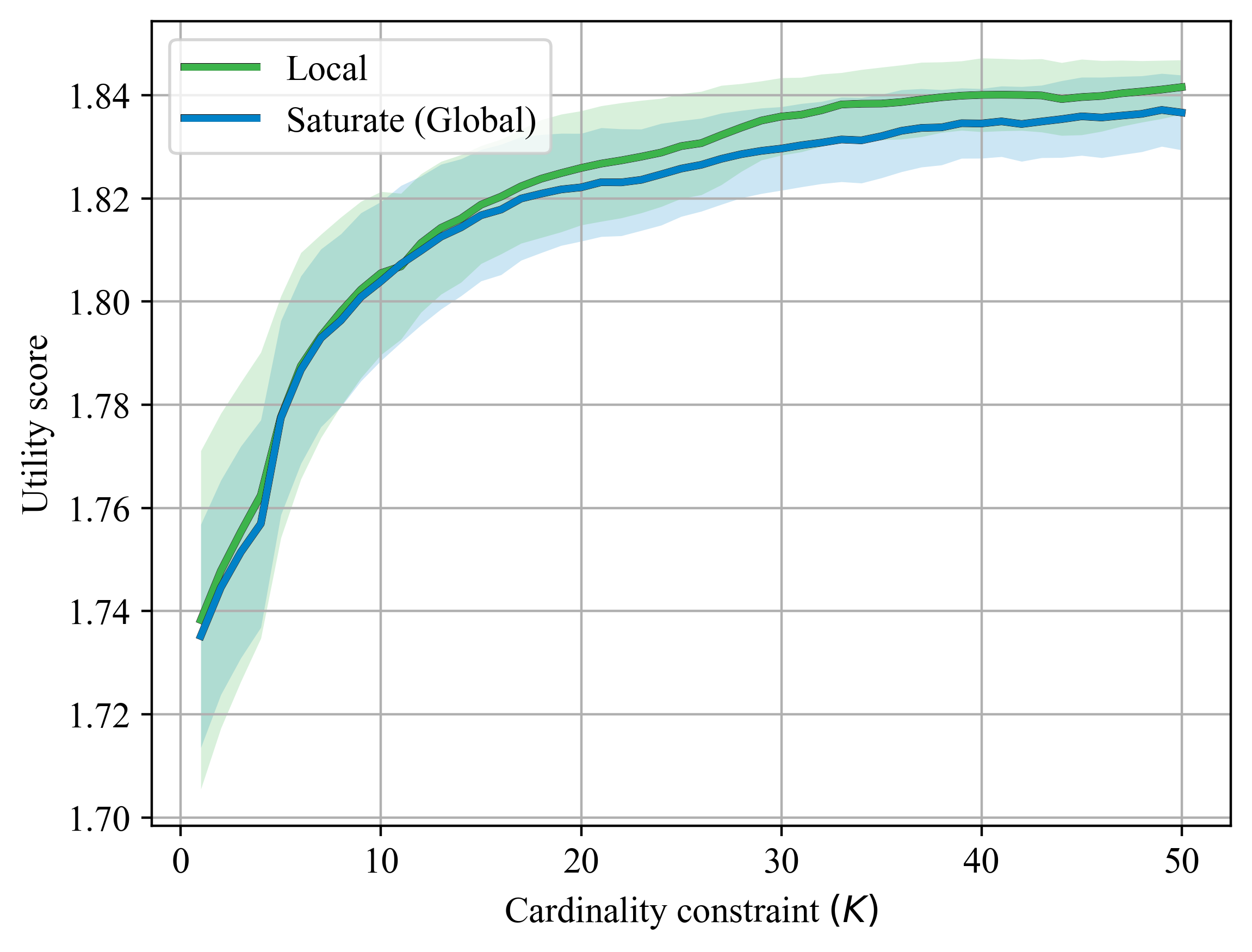} }}%
    \subfloat[\centering Performance on worst-case task]{{\includegraphics[width=.49\linewidth]{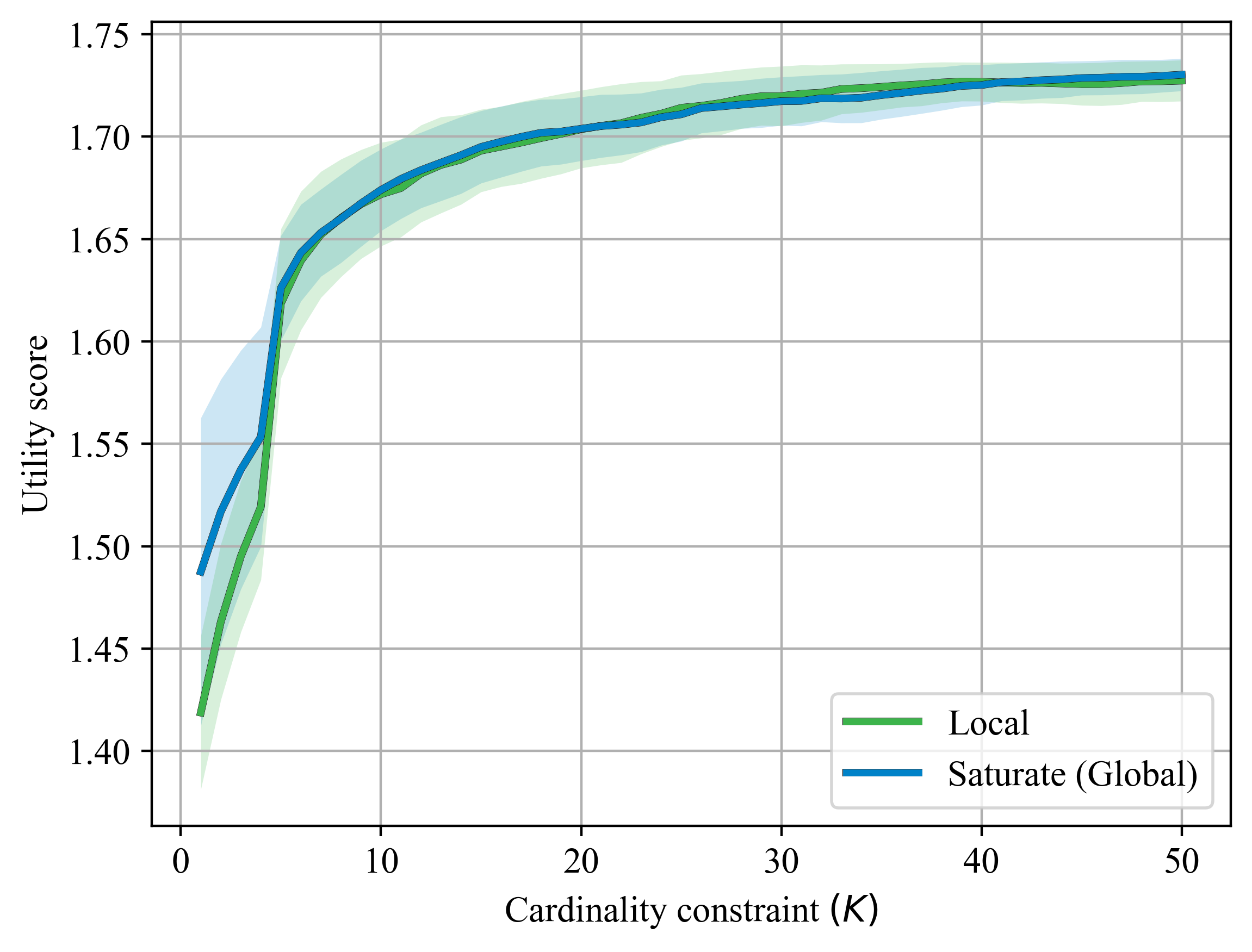}}}%
    \quad
    \subfloat[\centering Performance on local worst-case distribution]{{\includegraphics[width=.49\linewidth]{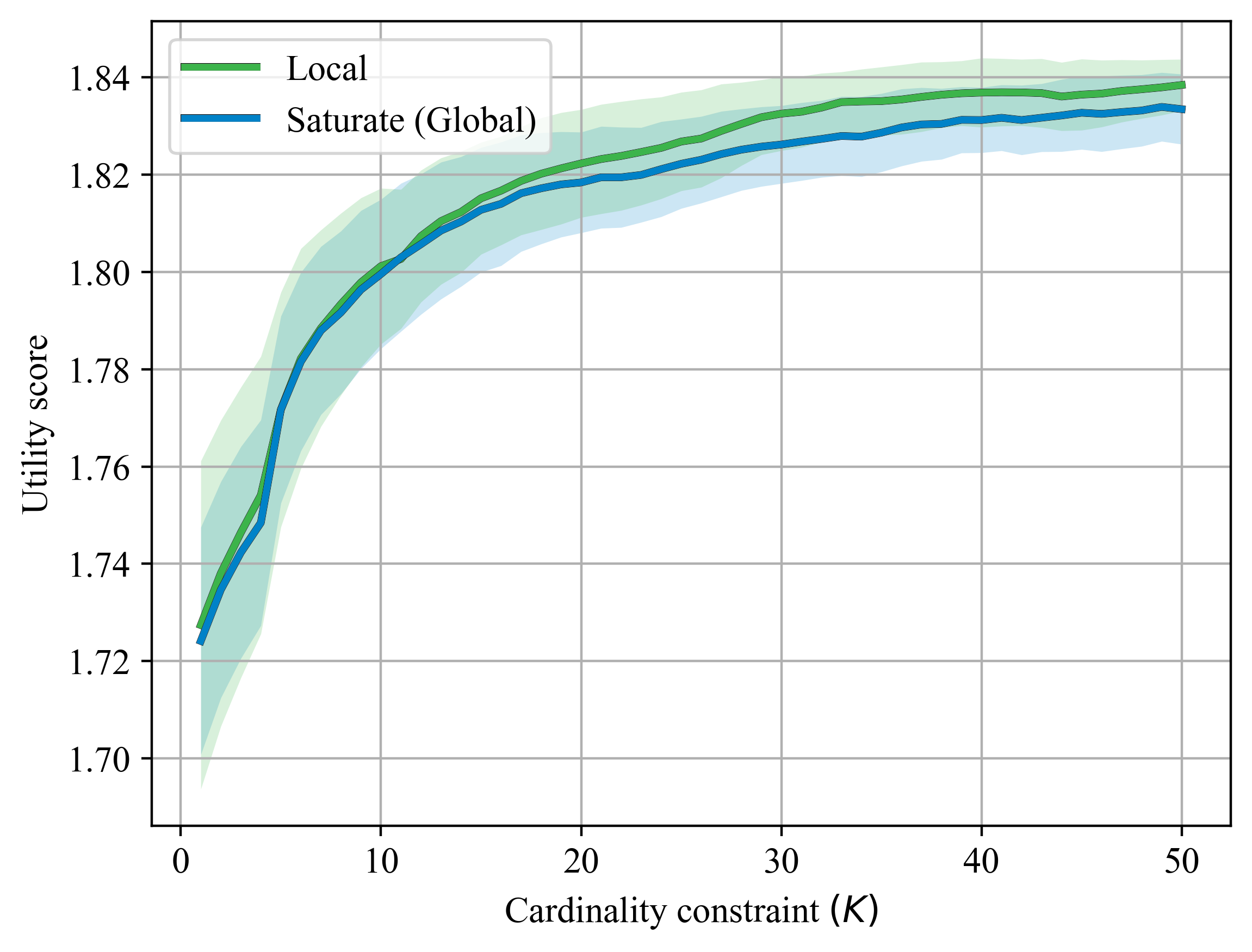}}}%
    \quad
    \caption{The average performances over $15$ runs of the two algorithms focused on optimizing the global worst-case task, and the local worst-case distribution as guided by the reference distribution, evaluated on the three criteria of reference distribution performance, worst-case task performance, and local worst-case distribution performance in the image summarization task of Section \ref{subsec:imgsum}. \textbf{Local} represents the relative-entropy-regularized \textsc{Stochastic Greedy} algorithm solving our novel formulation, aiming for \textit{local} worst-case distributional robustness in the neighborhood of the reference distribution. \textbf{Saturate (Global)} represents SSA, aiming for global worst-case task robustness. The highlighted areas indicate one standard deviation. The results have been put through a moving average filter with window size $6$.}%
    \label{fig:imgresults}%
\end{figure}

\section{Conclusion}
We proposed the novel formulation of \eqref{eq:novel-objective} to find a solution that is \textit{locally} distributionally robust in the neighborhood of a reference distribution which assigns an importance score to each task in a multi-task objective, using various statistical distances as a regularizer. The $L^1$ and $L^\infty$ metrics led to the proposal of the \textsc{Saturate with Preference} algorithm, which incorporates an element of preference into the standard SSA. Afterward, we demonstrated that using relative entropy as a regularizer, through duality, we can show that this novel formulation is equivalent to \eqref{eq:novel-clean}. Then, we proved that this dual formulation gives rise to the composition of a monotone increasing function with a normalized, monotone nondecreasing submodular function, which can be optimized with standard methods such as the \textsc{Stochastic Greedy} algorithm, enjoying theoretical guarantees. We proposed an application of the relative-entropy-regularized objective to online submodular optimization, through the use of a momentum-like weighing of the objective functions observed over time steps. We then experimentally corroborated our results for all three of the proposed settings, motivated by a practical scenario involving a sensor selection problem within a simulation of LEO satellites, using weak submodular functions. For the relative-entropy regularization setting, we compared our algorithm with two other algorithms focused on optimizing the performance of the worst-case task, and on directly optimizing the performance reference distribution itself. We concluded that solving our novel formulation produces a solution that performs well on the reference distribution, is \textit{locally} distributionally robust, and is cheap in terms of computation time. For the \textsc{Saturate with Preference} setting, we showed that our algorithm consistently outperforms the standard SSA in terms of the performance on the objective functions with the highest assigned preference. For the application to the online submodular optimization setting, we demonstrated that our algorithm achieves a comparable utility to the regular method of solving each observed objective function at the moment of its observation, with a comparable wall-clock time taken. However, it does so by using a much smaller total number of distinct satellites. Finally, for a more general, real-life application of the proposed relative-entropy-regularized algorithm, we tackled an image summarization task based on contemporary neural network usage.
\newpage


%
\bibliographystyle{ieeetr}
\bibliography{refs}

\end{document}